\def\eqref#1{equation~\ref{#1}}
\def\1{\bm{1}}
\DeclareMathAlphabet{\mathsfit}{\encodingdefault}{\sfdefault}{m}{sl}
\SetMathAlphabet{\mathsfit}{bold}{\encodingdefault}{\sfdefault}{bx}{n}
\newtheorem{prop}{Proposition}
\newtheorem{corollary}{Corollary}
\newcounter{result}
\newcommand{\result}{R\arabic{result}}
\newcounter{mainsec}
\newtheorem{lemma}{Lemma}[mainsec]
\newcommand{\GAUSS}{\texttt{GAUSS}}
\def\btheta{\bm{\theta}}
\def\bx{\bm{x}}
\title{Error analysis of a compositional score-based algorithm for simulation-based inference}
\author{%
  Camille Touron\\
  Univ. Grenoble Alpes, Inria
 \\
CNRS, Grenoble INP, LJK, France \\
  \texttt{camille.touron@inria.fr} \\
   \And
   Gabriel V. Cardoso \\
    Geostatistics team, Centre for geosciences and geoengineering\\
Mines Paris, PSL University, Fontaineableau, France\\
\texttt{gabriel.victorino\_cardoso@minesparis.psl.eu} \\
   \AND
   Julyan Arbel \\
 Univ. Grenoble Alpes, Inria
 \\
CNRS, Grenoble INP, LJK, France \\
   \texttt{julyan.arbel@inria.fr} \\
   \And
   Pedro L. C. Rodrigues \\
   Univ. Grenoble Alpes, Inria
 \\
CNRS, Grenoble INP, LJK, France \\
   \texttt{pedro.rodrigues@inria.fr} \\
}
\begin{document}
\allowdisplaybreaks[2]

\maketitle

\begin{abstract}

Simulation-based inference (SBI) has become a widely used framework in applied sciences for estimating the parameters of stochastic models that best explain experimental observations. A central question in this setting is how to effectively combine multiple observations in order to improve parameter inference and obtain sharper posterior distributions. Recent advances in score-based diffusion methods address this problem by constructing a compositional score, obtained by aggregating individual posterior scores within the diffusion process. While it is natural to suspect that the accumulation of individual errors may significantly degrade sampling quality as the number of observations grows, this important theoretical issue has so far remained unexplored. In this paper, we study the compositional score produced by the \texttt{GAUSS} algorithm of \citet{linhart2024diffusion} and establish an upper bound on its mean squared error in terms of both the individual score errors and the number of observations. We illustrate our theoretical findings on a Gaussian example, where all analytical expressions can be derived in a closed form.

\end{abstract}

\section{Introduction}
\tcbset{colback=white,colframe=black, boxrule=0.5pt, coltitle=white, fonttitle=\bfseries}
Probabilistic approaches to inverse problems~\citep{tarantola2005inverse} aim at inferring parameters $\btheta$ of a stochastic model from its outputs $\bx$ through the posterior distribution $p(\btheta|\bx)$. Directly sampling from this posterior is often difficult in practice, since the associated likelihood $p(\bx|\btheta)$ is frequently intractable. To address this, simulation-based inference with conditional score-based modeling~\citep{sharrocksequentialneuralscoreestimatio} approximates the posterior by learning a time-dependent conditional estimate $s_{\phi}(\btheta,\bx,t)$ of the score of noisy versions of the posterior, $\nabla_{\btheta}\log p_t(\btheta|\bx)$. Importantly, this training relies only on joint samples $(\btheta_i,\bx_i)\sim p(\btheta,\bx)$ which can be obtained sequentially as $\btheta_i\sim \lambda(\btheta)$, the prior, and then $\bx_i\vert \btheta_i\sim p(\bx\vert \btheta_i)$, thus circumventing the need to evaluate the likelihood (see Appendix~\ref{Appendix_A1} for details). A central question in this line of work is how the denoising score matching mean squared error, denoted $\epsilon_{\mathrm{DSM}}^2$, affects the quality of samples obtained via a backward diffusion process (see Figure~\ref{fig:sidecap}). Recent results by \citet{gao2025wasserstein} provide explicit bounds on the Wasserstein error of the approximate samples in terms of $\epsilon_{\mathrm{DSM}}^2$ and the discretization hyperparameters (number of steps, step size) used in the backward process. These results imply that one can reach a prescribed sampling accuracy (in Wasserstein distance) either by tuning the diffusion hyperparameters or by improving the accuracy of the score estimator.

We extend this setting to $n$ IID observations, with the goal of inferring parameters from the posterior $p(\btheta|\bx_{1:n})$. In practice, additional observations improve inference, as the posterior concentrates with growing $n$ (see Figure~\ref{fig:sidecap}). Recent works~\citep{linhart2024diffusion, geffner, arruda2025compositionalamortizedinferencelargescale, timeseriesgloecker} propose a compositional score approach: instead of directly approximating $\nabla_{\btheta}\log p_t(\btheta| \bx_{1:n})$ with a highly complex network, they aggregate individual posterior scores $s_\phi(\btheta, \bx_j, t) \approx \nabla_{\btheta}\log p_t(\btheta|\bx_j)$. The resulting compositional score can then be used in backward diffusion or annealed Langevin dynamics to sample from the multi-observation posterior. {Importantly, if we can bound the error of the compositional score estimate, then we can guarantee convergence of our sampling to the target multi-observation posterior in terms of Wasserstein distance \citep{gao2025wasserstein}.}
 
However, while the error of each individual score estimate can be controlled during training time, the error of the aggregated compositional score arises from the accumulation of individual errors as $n$ grows. To the best of our knowledge, this accumulation effect has not been theoretically analyzed so far. Here we address this gap: focusing on the compositional score~(\ref{eq:compositional-score}) defined by the \texttt{GAUSS} algorithm of \citet{linhart2024diffusion}, we derive in Proposition~\ref{prop:diffusion_discretization_2} an upper bound on the mean squared error (MSE) of its estimate~(\ref{eq:score-estimate}) as a function of the $n$ individual score errors.

\begin{SCfigure}[1.0][t]
    \centering
    \includegraphics[width=0.7\linewidth]{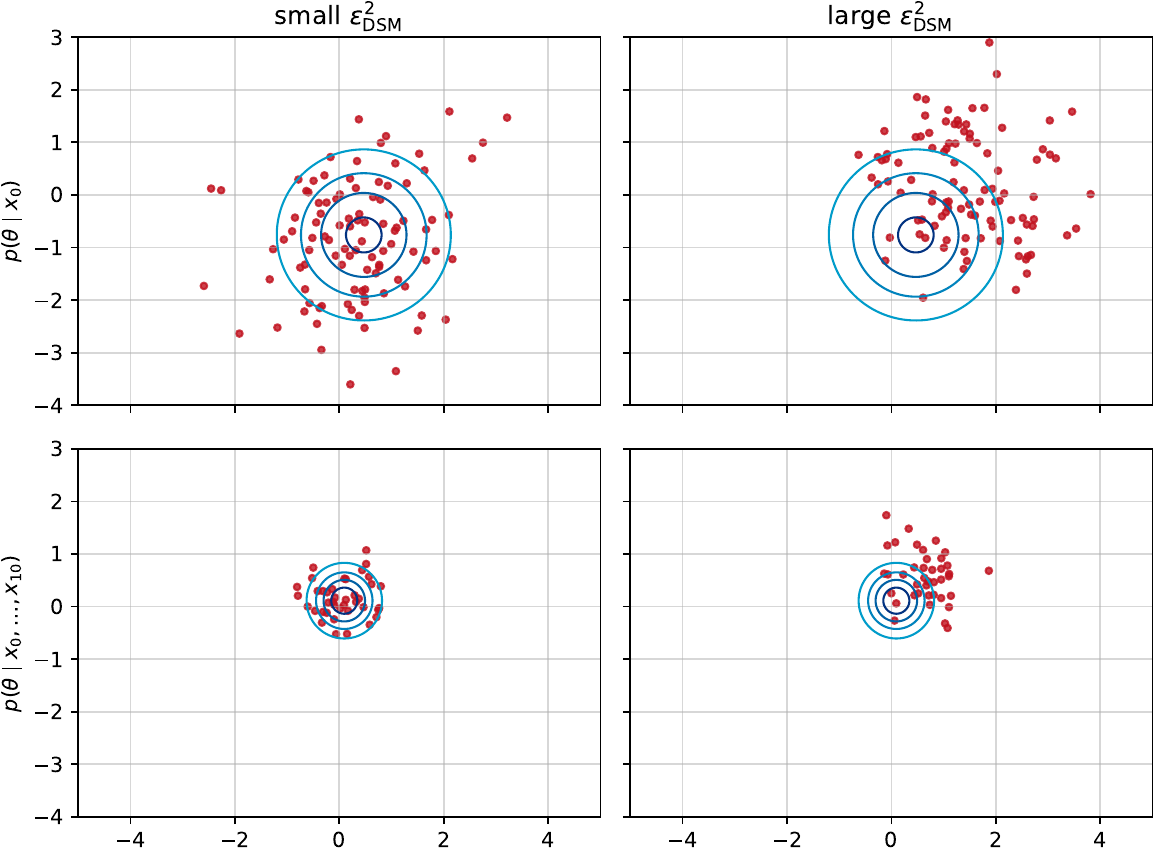}
    \caption{\footnotesize Contours represent the true target posterior distribution $p$ (see Appendix~\ref{gaussian_ex} for details), either conditioned on a single observation (top row) or eleven of them (bottom row): increasing the number of conditional observations sharpens the posterior and allows for better inference. 
    Scattered points in red are samples from $\tilde p$ obtained by a diffusion process with an inexact score estimate with mean squared error $\epsilon_\mathrm{DSM}^2$: larger errors tend to bias the sampling and degrade its quality.
    }
    \label{fig:sidecap}
\end{SCfigure}

\section{Background on compositional score estimation with \texttt{GAUSS}} 
\citet{linhart2024diffusion} adopt the common assumption~\citep{sohl2015deep} that the backward kernels of the diffused prior and the diffused individual posteriors can be well approximated by Gaussian distributions for all times $t\in[0,T]$ of a diffusion process, namely
\begin{equation*}
    \hat q^{\lambda}_{0|t}(\btheta_0|\btheta_t)=\mathcal{N}\big(\btheta_0;\mu_{t,\lambda}(\btheta_t), \Sigma_{t,\lambda}(\btheta)\big)~~\text{and}~~\hat q_{0|t}(\btheta_0| \btheta_t,\bx_j)=\mathcal{N}\big(\btheta_0;\mu_{t,j}(\btheta_t), \Sigma_{t,j}(\btheta)\big),
\end{equation*}
with $j \in \{1, \dots, n\}$. In addition, their \GAUSS~algorithm assumes that both the prior and each individual posterior are themselves Gaussian, $\mathcal{N}(\btheta;\mu_{\lambda},\Sigma_{\lambda})$ and $\mathcal{N}(\btheta;\mu_j,\Sigma_j)$. Under this assumption, the diffused covariances $\Sigma_{t,\lambda}$ and $\Sigma_{t,j}$ are fully determined and, importantly, no longer depend on $\btheta$. With these simplifications, the diffused compositional score takes the form
\begin{equation}
    \label{eq:compositional-score}
    \nabla_{\btheta} \log p_t(\btheta \vert \bx_{1:n})=\Lambda^{-1}\Bigg(\sum_{j=1}^n \Sigma_{t,j}^{-1}\nabla_{\btheta}\log p_t(\btheta|\bx_j)-(n-1)\Sigma_{t,\lambda}^{-1}\nabla_{\btheta}\log \lambda_t(\btheta)\Bigg),
\end{equation}
where $\Lambda = \sum_j \Sigma^{-1}_{t,j} - (n-1)\Sigma^{-1}_{t, \lambda}$. {Note that the compositional score~(\ref{eq:compositional-score}) does not stem from the simple addition of the $n$ individual scores, as it aims at correctly approximating the true score of a noisy version of the multi-observation posterior for some specific noise level prescribed by a classical diffusion process.} For most priors used in the SBI literature (Gaussian, Uniform, log-Normal), the diffused score $\nabla_{\btheta}\log \lambda_t(\btheta)$ has a closed-form expression and therefore introduces no error~\citep{sharrocksequentialneuralscoreestimatio}. In contrast, Equation~(\ref{eq:compositional-score}) also involves the scores of the individual posteriors, which are only available through the approximations $s_\phi(\btheta,\bx_j,t)$ and thus contribute to errors $\epsilon_{\mathrm{DSM},j}^2$. A further source of bias comes from the estimation of the covariance matrices $\Sigma_{\lambda}$ and $\Sigma_j$. Covariance $\Sigma_{\lambda}$ can be approximated by the empirical covariance of prior samples. However, for each posterior one must simulate a backward diffusion process using the approximate score $s_{\phi}(\btheta,\bx_j,t)$ to generate samples from $\tilde{p}(\btheta|\bx_j) \approx {p}(\btheta|\bx_j)$, and then compute the empirical covariance $\tilde \Sigma_j$. The estimator for the compositional score~(\ref{eq:compositional-score}) is then written as:
\begin{align}\label{eq:score-estimate}
    s(\btheta,\bx_{1:n},t)&= \tilde \Lambda^{-1}\left(\sum_{j=1}^n \tilde \Sigma_{t,j}^{-1}s_{\phi}(\btheta,\bx_j,t)-(n-1)\tilde \Sigma_{t,\lambda}^{-1}\nabla_{\btheta}\log \lambda_t(\btheta)\right).
\end{align}
In what follows, we derive an upper bound on the mean squared error of this estimator in terms of the individual score errors $\epsilon_{\mathrm{DSM},j}^2$, the errors incurred in estimating the precision matrices ${\Sigma}^{-1}_\lambda$ and ${\Sigma}^{-1}_j$, and the number of observations $n$. We use $\Vert.\Vert$ to refer either to the Euclidean norm when applied to vectors, or spectral norm when applied to matrices.

\section{Contributions}
We first derive an upper bound on the error of the estimator for the precision matrices, noting that for the \GAUSS~ algorithm the errors $\Vert\Sigma^{-1}_{t,j}-\tilde \Sigma^{-1}_{t,j}\Vert$ at each $t\in[0,T]$ are directly related to the initial estimation error $\Vert\Sigma^{-1}_j-\tilde \Sigma^{-1}_j\Vert$, which in turn depends on the sampling quality from each individual posterior. Since Gaussian distributions are smooth log-concave, we can use the aforementioned results from~\citet{gao2025wasserstein} to tune the discretization hyperparameters of each backward diffusion process and drive the individual score error $\epsilon_{\mathrm{DSM},j}^2$ sufficiently low to achieve a prescribed small Wasserstein error. The following proposition (proof in Appendix~\ref{proof_prop_1}) provides a bound on the precision estimation error as a function of a fixed Wasserstein error $\eta$.
\def\statementprecisionmatrix{
Choosing $0 < \eta < \min\Big(\sqrt{\frac{\|\Sigma\|}{2}},f(\|\Sigma\|, \|\Sigma^{-1}\|)\Big)$ we get
\begin{equation*}
      \mathcal{W}_2(p,\tilde p)\leq \eta \Rightarrow \Vert\tilde\Sigma-\Sigma\Vert\leq \gamma \Rightarrow \Vert\tilde\Sigma^{-1}-\Sigma^{-1}\Vert\leq \frac{\gamma\Vert\Sigma^{-1}\Vert^2}{1-\gamma\Vert\Sigma^{-1}\Vert},
\end{equation*}
where
\begin{equation*}
\gamma = \left(2\sqrt{2\|\Sigma\|}\eta +\eta^2(1+\sqrt{2})\right)\frac{\|\Sigma\|}{\|\Sigma\|-\eta\sqrt{2\|\Sigma\|}} > 0,
\end{equation*}
}
\begin{prop}[Precision matrix error] \label{prop:diffusion_discretization_1}
\statementprecisionmatrix
and function $f:\mathbb{R}_+^2\to\mathbb{R}_+$ is defined in Appendix~\ref{proof_prop_1}.
\end{prop}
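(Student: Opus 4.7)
The plan is to reduce the Wasserstein bound first to a spectral-norm control on the covariance difference $\|\Sigma - \tilde\Sigma\|$, and then to translate this into a bound on the precision difference via a Neumann series argument.

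For the first reduction, I would start from the closed-form Bures--Wasserstein identity for Gaussians,
\[
\mathcal{W}_2^2(p, \tilde p) = \|\mu - \tilde\mu\|^2 + \mathrm{Tr}\bigl(\Sigma + \tilde\Sigma - 2(\Sigma^{1/2}\tilde\Sigma\,\Sigma^{1/2})^{1/2}\bigr),
\]
so that $\mathcal{W}_2(p,\tilde p) \leq \eta$ immediately yields $d_B(\Sigma,\tilde\Sigma) \leq \eta$ for the Bures distance. I would then use the variational characterization $d_B(\Sigma,\tilde\Sigma) = \min_U \|\Sigma^{1/2} - U\tilde\Sigma^{1/2}\|_F$ over orthogonal $U$, combined with the comparison between Frobenius and spectral norms, to deduce an estimate of the form $\|\Sigma^{1/2} - \tilde\Sigma^{1/2}\| \leq \sqrt{2}\,\eta$. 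This is the main technical obstacle: the optimal alignment in the Bures formula is not the identity, so transferring a trace-type inequality into a clean spectral-norm bound on the \emph{symmetric} principal square root requires a symmetrization step, and the factor $\sqrt{2}$ that appears everywhere in $\gamma$ is inherited from it.

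Once the square-root estimate is in hand, I would lift it to the covariance difference via the identity
\[
\Sigma - \tilde\Sigma = \Sigma^{1/2}\bigl(\Sigma^{1/2} - \tilde\Sigma^{1/2}\bigr) + \bigl(\Sigma^{1/2} - \tilde\Sigma^{1/2}\bigr)\tilde\Sigma^{1/2},
\]
and submultiplicativity of the spectral norm to obtain $\|\Sigma - \tilde\Sigma\| \leq \bigl(\|\Sigma^{1/2}\| + \|\tilde\Sigma^{1/2}\|\bigr)\|\Sigma^{1/2} - \tilde\Sigma^{1/2}\|$. Using the reverse triangle inequality $\|\tilde\Sigma^{1/2}\| \leq \|\Sigma^{1/2}\| + \sqrt{2}\,\eta$ and then refining the bound on $\|\tilde\Sigma^{1/2}\|$ via a resolvent-type argument that produces the factor $\|\Sigma\|/(\|\Sigma\| - \eta\sqrt{2\|\Sigma\|})$ reproduces the precise form of $\gamma$ stated in the proposition. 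The hypothesis $\eta < \sqrt{\|\Sigma\|/2}$ is exactly what keeps this denominator strictly positive.

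Finally, from $\|\tilde\Sigma - \Sigma\| \leq \gamma$ I would write $\tilde\Sigma^{-1} = \bigl(I + \Sigma^{-1}(\tilde\Sigma - \Sigma)\bigr)^{-1}\Sigma^{-1}$ and expand by Neumann series, which converges as soon as $\gamma\|\Sigma^{-1}\| < 1$. Summing the resulting geometric series gives
\[
\|\tilde\Sigma^{-1} - \Sigma^{-1}\| \leq \frac{\gamma\|\Sigma^{-1}\|^2}{1 - \gamma\|\Sigma^{-1}\|},
\]
which is the final inequality of the statement. The auxiliary function $f(\|\Sigma\|,\|\Sigma^{-1}\|)$ in the hypothesis is then defined as the largest value of $\eta$ for which the induced $\gamma$ still satisfies $\gamma\|\Sigma^{-1}\| < 1$, i.e.\ the condition that makes the Neumann expansion valid. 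Apart from the Bures-to-spectral comparison in the first reduction, every subsequent step is routine matrix perturbation algebra.
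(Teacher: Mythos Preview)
Your Neumann-series argument for $\|\tilde\Sigma^{-1}-\Sigma^{-1}\|$ and your description of $f$ as the threshold on $\eta$ guaranteeing $\gamma\|\Sigma^{-1}\|<1$ match the paper. The gap is in the first reduction. The Bures--Wasserstein identity you open with requires \emph{both} $p$ and $\tilde p$ to be Gaussian, but the statement makes no such assumption, and in the intended application $\tilde p$ is the law produced by a discretized backward diffusion driven by an approximate score --- not Gaussian even when the target $p$ is. The paper's route is entirely different and distribution-free: it first proves the bound in dimension one by writing $|\sigma^2-\tilde\sigma^2|=|\mathbb{E}[(\btheta-\tilde\btheta)(\btheta+\tilde\btheta)]|$ under the optimal $\mathcal{W}_2$ coupling and applying Cauchy--Schwarz, then lifts to the matrix spectral norm by projecting onto one-dimensional marginals (citing \citealp{wass_cov_bound}). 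The amplification factor $\|\Sigma\|/(\|\Sigma\|-\eta\sqrt{2\|\Sigma\|})$ arises not from a resolvent identity but from eliminating $\|\tilde\Sigma\|^{1/2}$ via the elementary bound $\sqrt{a+b}\leq\sqrt a + b/(2\sqrt a)$, which turns the estimate into a self-referential inequality in $\|\Sigma-\tilde\Sigma\|$ that is then solved for.

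Even granting Gaussianity, the step you flag as ``the main technical obstacle'' is genuinely open in your sketch. From the variational form $d_B(\Sigma,\tilde\Sigma)=\min_U\|\Sigma^{1/2}-U\tilde\Sigma^{1/2}\|_F$ one obtains $d_B\leq\|\Sigma^{1/2}-\tilde\Sigma^{1/2}\|_F$ by taking $U=I$, which is the wrong direction: a small Bures distance controls an \emph{aligned} square-root difference, not the unaligned one your factorization $\Sigma-\tilde\Sigma=\Sigma^{1/2}(\Sigma^{1/2}-\tilde\Sigma^{1/2})+(\Sigma^{1/2}-\tilde\Sigma^{1/2})\tilde\Sigma^{1/2}$ requires. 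Whatever symmetrization you have in mind has to be written out, and it is not clear it delivers the specific constants the downstream algebra needs to reproduce the stated $\gamma$.
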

We now propose an upper bound on the MSE between the compositional score~(\ref{eq:compositional-score}) and its estimate~(\ref{eq:score-estimate}) for any time $t\in [0,T]$ of the diffusion process as a function of the individual score errors, the precision estimation errors and the number of observations $n$. For simplicity, we assume a Gaussian prior, leading to closed-form formula for the corresponding scores and the precision matrices $\Sigma_{t,\lambda}^{-1}$ (a more general result is stated in Appendix~\ref{proof_prop_2}). We also assume that individual posterior score errors $\epsilon_{\mathrm{DSM},j}^2$(resp. precision errors) are bounded by the same constant $\epsilon_{\mathrm{DSM}}^2$ (resp. $\epsilon$). Note that in practice, the error $\epsilon$ can be directly linked to the Wasserstein error $\eta$ of each diffusion process, and thus indirectly to $\epsilon_{\mathrm{DSM}}^2$, as stated in Proposition~\ref{prop:diffusion_discretization_1}. 
Finally, all constants, except $\epsilon$, are time-dependent (proof in Appendix~\ref{intermediate_results}).

\begin{prop}[Compositional score error] \label{prop:diffusion_discretization_2}
Let  $\eta$ be chosen as in Proposition~\ref{prop:diffusion_discretization_1} and denote 
\begin{equation*}
\begin{array}{rcl}
M = \max_j \|\Sigma_{t, j}^{-1}\| &\text{with}& L=\max_j \mathbb{E}_{\btheta \sim p_t(\cdot \mid \bx_{1:n})}\left(\Vert\nabla\log p_t(\btheta| \bx_j)\Vert^2\right) \\[1em]
M_\lambda \geq \|\Sigma^{-1}_{t, \lambda}\| & \text{with} & L_{\lambda} \geq \mathbb{E}_{\btheta \sim p_t(\cdot  \mid \bx_{1:n})}\left(\Vert\nabla\log \lambda_{t}(\btheta)\Vert^2\right).
\end{array}
\end{equation*}
Suppose that  $\mathbb{E}_{\btheta \sim p_t(\cdot |\bx_{1:n})}(\Vert \nabla_{\btheta}\log p_t(\btheta|\bx_j)-s_{\phi}(\btheta,\bx_j,t)\Vert^2) \leq \epsilon_{\mathrm{DSM}}^2$ and $\Vert\Sigma_j^{-1}-\tilde \Sigma_j^{-1}\Vert\leq \epsilon$ for all $j=1,\ldots,n$ such that $n \epsilon < \frac{1}{\Vert\Lambda^{-1}\Vert}$. Then it holds 
\begin{multline*}
         \mathbb{E}_{\btheta \sim p_t(\cdot \mid \bx_{1:n})}\big[\Vert\nabla\log p_t(\btheta\mid \bx_{1:n})-s(\btheta,\bx_{1:n},t)\Vert^2 \big]
         \leq 
         \left[(n-1)\Vert\Lambda ^{-1}\Vert\sqrt{L_{\lambda}}\left(\frac{n\epsilon\Vert\Lambda^{-1} \Vert M_{\lambda}}{1-n\epsilon\Vert\Lambda^{-1} \Vert}\right)\right.\\
         \left.
         +n \Vert\Lambda ^{-1}\Vert(\sqrt{L}+\epsilon_{\mathrm{DSM}})\left(\epsilon+\frac{n\epsilon\Vert\Lambda^{-1} \Vert(M+\epsilon)}{1-n\epsilon\Vert\Lambda^{-1} \Vert} \right)
         +\Vert\Lambda ^{-1}\Vert n M\epsilon_{\mathrm{DSM}}\right]^2.
\end{multline*}
If $\epsilon_{\mathrm{DSM}}$ is sufficiently small and accompanied by a proper choice of diffusion hyperparameters such that $\mathcal{W}_2(\tilde p(\btheta\mid \bx_j),p(\btheta\mid \bx_j))\leq \eta$ for all $j=1,\ldots,n$ (Proposition 4 in~\citealp{gao2025wasserstein}), then the precision estimation error $\epsilon$ can be further bounded using Proposition~\ref{prop:diffusion_discretization_1} and $\eta$.
\end{prop}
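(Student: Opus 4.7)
The plan is to control the error $\nabla\log p_t(\btheta\mid\bx_{1:n})-s(\btheta,\bx_{1:n},t)$ by writing it as a sum of four telescoping pieces, each isolating a single source of error, and then bounding the $L^2(p_t(\cdot\mid\bx_{1:n}))$-norm of the total error by Minkowski's inequality. Write $S_j=\Sigma_{t,j}^{-1}$, $\tilde S_j=\tilde\Sigma_{t,j}^{-1}$, $g_j=\nabla\log p_t(\btheta\mid\bx_j)$, $\tilde g_j=s_\phi(\btheta,\bx_j,t)$, $S_\lambda=\Sigma_{t,\lambda}^{-1}$, $g_\lambda=\nabla\log\lambda_t(\btheta)$, $V=\sum_j S_jg_j-(n-1)S_\lambda g_\lambda$ and $\tilde V=\sum_j\tilde S_j\tilde g_j-(n-1)S_\lambda g_\lambda$ (the Gaussian-prior assumption makes $S_\lambda$ and $g_\lambda$ exact). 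I would use
\begin{equation*}
\Lambda^{-1}V-\tilde\Lambda^{-1}\tilde V \;=\; \Lambda^{-1}(V-\tilde V)+(\Lambda^{-1}-\tilde\Lambda^{-1})\tilde V,
\end{equation*}
and further split $V-\tilde V=\sum_j S_j(g_j-\tilde g_j)+\sum_j(S_j-\tilde S_j)\tilde g_j$, producing four terms: (A) $\Lambda^{-1}\sum_j S_j(g_j-\tilde g_j)$, (B) $\Lambda^{-1}\sum_j(S_j-\tilde S_j)\tilde g_j$, (C) $(\Lambda^{-1}-\tilde\Lambda^{-1})\sum_j\tilde S_j\tilde g_j$, and (D) $-(n-1)(\Lambda^{-1}-\tilde\Lambda^{-1})S_\lambda g_\lambda$.

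The key analytic step is to control $\|\Lambda^{-1}-\tilde\Lambda^{-1}\|$. Since $\Lambda-\tilde\Lambda=\sum_j(\tilde S_j-S_j)\cdot(-1)$, the triangle inequality yields $\|\Lambda-\tilde\Lambda\|\leq n\epsilon$, and the assumption $n\epsilon\|\Lambda^{-1}\|<1$ allows the Neumann-series identity $\tilde\Lambda^{-1}-\Lambda^{-1}=\Lambda^{-1}(\Lambda-\tilde\Lambda)\tilde\Lambda^{-1}$ combined with $\|\tilde\Lambda^{-1}\|\leq\|\Lambda^{-1}\|/(1-n\epsilon\|\Lambda^{-1}\|)$ to give
\begin{equation*}
\|\Lambda^{-1}-\tilde\Lambda^{-1}\|\;\leq\;\frac{n\epsilon\|\Lambda^{-1}\|^{2}}{1-n\epsilon\|\Lambda^{-1}\|}.
\end{equation*}
This is the same Neumann-series perturbation inequality already used in Proposition~\ref{prop:diffusion_discretization_1} to pass from $\|\tilde\Sigma-\Sigma\|$ to $\|\tilde\Sigma^{-1}-\Sigma^{-1}\|$, applied here to the aggregate precision $\Lambda$.

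Having these ingredients, I would take the $L^2$-norm under $p_t(\cdot\mid\bx_{1:n})$ term by term, applying submultiplicativity of the spectral norm, Minkowski inside each sum, and the bounds $\|\tilde g_j\|_{L^2}\leq\sqrt{L}+\epsilon_{\mathrm{DSM}}$ (Minkowski applied to $\tilde g_j=g_j+(\tilde g_j-g_j)$) and $\|\tilde S_j\|\leq M+\epsilon$ (triangle). Term (A) contributes at most $\|\Lambda^{-1}\|nM\epsilon_{\mathrm{DSM}}$; term (D) contributes at most $(n-1)\|\Lambda^{-1}\|\sqrt{L_\lambda}\cdot n\epsilon\|\Lambda^{-1}\|M_\lambda/(1-n\epsilon\|\Lambda^{-1}\|)$; and terms (B) and (C) combine to $n\|\Lambda^{-1}\|(\sqrt{L}+\epsilon_{\mathrm{DSM}})\bigl(\epsilon+n\epsilon\|\Lambda^{-1}\|(M+\epsilon)/(1-n\epsilon\|\Lambda^{-1}\|)\bigr)$ after factoring the common prefactor $n\|\Lambda^{-1}\|(\sqrt L+\epsilon_{\mathrm{DSM}})$. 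A final Minkowski step assembles the three contributions into a single sum whose square yields the announced bound.

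The main obstacle is bookkeeping rather than analysis: the decomposition must be chosen so that after taking norms the three contributions group exactly as claimed, in particular so that (B) and (C) merge through the common factor $\sum_j\|\tilde g_j\|$-type bound and the perturbation control on $\Lambda^{-1}-\tilde\Lambda^{-1}$. The only non-trivial analytical ingredient is the Neumann-series perturbation step; the rest follows from Minkowski, the triangle inequality, and submultiplicativity, but matching prefactors exactly to the stated form requires careful grouping of the $n$ versus $n-1$ factors and of the $(M+\epsilon)$ terms.
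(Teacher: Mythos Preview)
Your proposal is correct and follows essentially the same route as the paper. The paper uses the identical top-level split $\Lambda^{-1}\Gamma-\tilde\Lambda^{-1}\tilde\Gamma=\Lambda^{-1}(\Gamma-\tilde\Gamma)+(\Lambda^{-1}-\tilde\Lambda^{-1})\tilde\Gamma$, the same Neumann-series perturbation bound on $\|\Lambda^{-1}-\tilde\Lambda^{-1}\|$ (their Corollary~\ref{corollary_1}), and the same term-by-term control with $\|\tilde g_j\|_{L^2}\leq\sqrt{L}+\epsilon_{\mathrm{DSM}}$ and $\|\tilde S_j\|\leq M+\epsilon$; the only cosmetic difference is that the paper packages the bounds on $\Gamma-\tilde\Gamma$ and $\tilde\Gamma$ into separate lemmas and expands squares explicitly via Cauchy--Schwarz, whereas you invoke Minkowski directly and keep the four pieces (A)--(D) flat, which is cleaner but mathematically equivalent.
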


\begin{figure}[h]
    \centering
    \includegraphics[width=\linewidth]{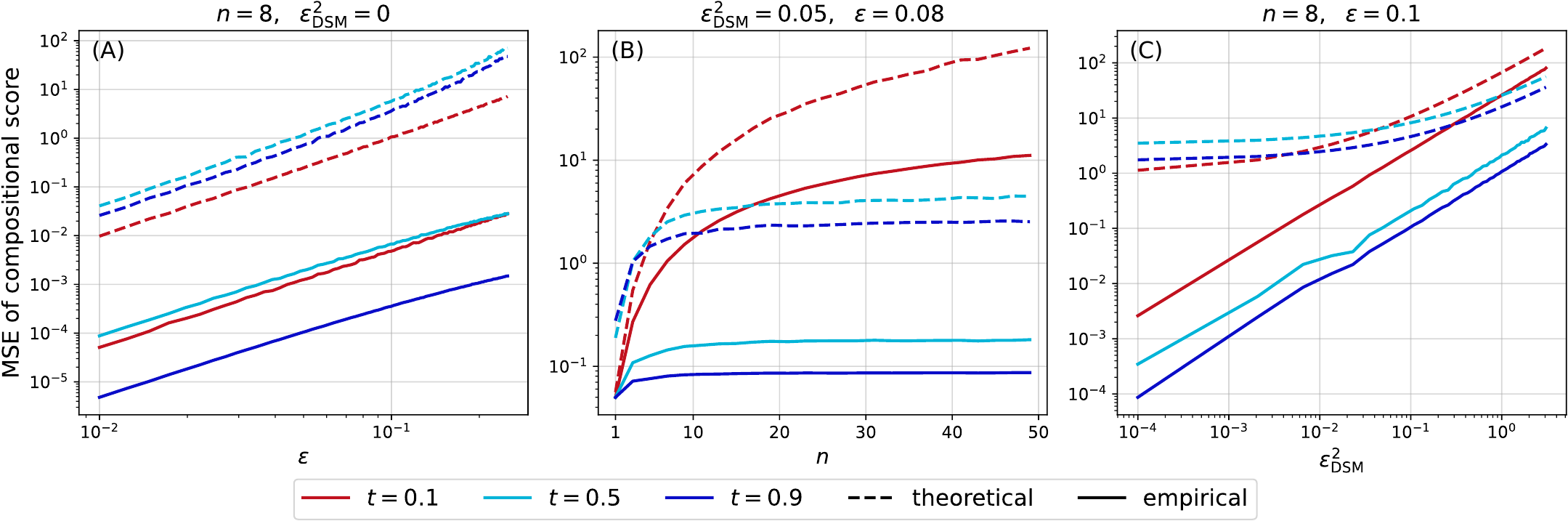}
    \caption{\footnotesize Solid lines stand for the evolution of the empirical MSE of the compositional score estimate~(\ref{eq:score-estimate}) computed with the algorithm \texttt{GAUSS} at different times $t\in[0,1]$ of a diffusion process for the 2D Gaussian example (see Appendix~\ref{gaussian_ex}). Dashed lines represent the evolution of the theoretical bound of the aforementioned compositional score error derived in Proposition~\ref{prop:diffusion_discretization_2}. The empirical and theoretical evolutions are represented with respect to (A) the precision estimation error $\epsilon$ assuming exact individual scores are known, (B) the number of conditional observations $n$ and (C) the individual score error $\epsilon_{\mathrm{DSM}}^2$ that contributes both directly to the compositional score error and indirectly through the precision estimation. The choice of the fixed parameters, especially $\epsilon$ in panel (B) and (C), is discussed in detail in Appendix~\ref{numerical_illustration}.}
    \label{fig:bounds}
\end{figure}
\textbf{Remark.} In the assumptions of Proposition~\ref{prop:diffusion_discretization_2}, we bound the individual score error in expectation over the multi-observation posterior, although one could argue that it would be more natural to bound in expectation over each individual posterior. This choice is questionable, but it ensures that individual score estimates provide good approximations on both the support of the multi-observation posterior, which is our real target, and on the support of each \textit{individual} posterior (see Appendix~\ref{measure_disc} for details). 

\textbf{Numerical illustrations.} We showcase our theoretical bound on a Gaussian example (see Appendix \ref{gaussian_ex} for more details) where all true scores can be analytically computed and for which the Gaussianity assumption of both the backward kernels and the individual posteriors happens to be verified : in this setting, the compositional score (\ref{eq:compositional-score}) corresponds to the true score; therefore, no source of error stemming from external assumptions interferes with the analysis of the compositional MSE. We compare the evolution of our bound to that of the compositional score error obtained empirically, depending on the number of conditional observations $n$, the precision estimation error $\epsilon$ and the individual score error $\epsilon_{\mathrm{DSM}}^2$. When all individual scores are perfectly known ($\epsilon_{\mathrm{DSM}}^2=0$), the sole intrinsic source of error associated with the compositional score~(\ref{eq:score-estimate}) stems from the precision estimation error $\epsilon$, which itself depends only on the discretization hyperparameters of each individual diffusion process and the number of samples used to compute each empirical precision matrix. In such a case, Figure~\ref{fig:bounds}(A) shows that our upper bound successfully captures the global trend of the empirical error. As the number of conditional observations $n$ grows, the empirical compositional score error and our theoretical bound surprisingly do not blow up but rather tend to stabilize more or less quickly depending on the diffusion time: $\Vert \Lambda^{-1}\Vert$ (appearing in both Equation~(\ref{eq:compositional-score}) and our bound) indeed hides a dependence in $1/n$ that seems to counterbalance the simple accumulation of the $n$ individual score errors. Our bound follows this trend especially for large $n$ and/or advanced diffusion times as seen in Figure~\ref{fig:bounds}(B). Finally, Figure~\ref{fig:bounds}(C) shows that our theoretical bound well captures the global evolution of the empirical score error for individual score errors $\epsilon_{\mathrm{DSM}}^2\geq0.01$ but seems to be a bit large for smaller values: the empirical evolution seems to be $\mathcal{O}(\epsilon_{\mathrm{DSM}}^2)$ while our theoretical bound contains a constant term $C$ and is thus in $\mathcal{O}(\epsilon_{\mathrm{DSM}}^2+\epsilon_{\mathrm{DSM}}+C)$, which biases the evolution.

\textbf{Conclusion and perspectives.}
We provide a bound on the mean squared error between the compositional score~(\ref{eq:compositional-score}) and its estimate~(\ref{eq:score-estimate}) as a function of the $n$ individual score errors and the precision estimation error, that is specific to the algorithm. Note that we do not try to quantify the error made by the Gaussian approximations of backward kernels in our analysis, but rather provide a bound for a compositional score estimate computed according to the \texttt{GAUSS} method. Our upper bound could be used to derive convergence bounds for diffusion process using such compositional estimate or even tune the corresponding diffusion discretization hyperparatemeters to achieve a better sampling quality from the multi-observation posterior. Also, our analysis allows to better understand the effect of each source of errors involved in the compositional estimate~(\ref{eq:score-estimate}) as well as the link between the precision estimation error and the individual score error.

It would be interesting to extend our error analysis to other methods for estimating precision matrices, in particular those keeping a dependence in $\btheta$ for such matrices (e.g.  \texttt{JAC} algorithm, see \citealp{linhart2024diffusion}): indeed, the algorithm \texttt{GAUSS} conveniently renders the precision matrices independent of $\btheta$ but at the cost of a strong Gaussianity assumption on each individual posterior. Note that our upper bound remains valid for these other types of estimation methods, if we consider bounding $\sup_{\btheta}\Vert\Sigma_{t,j}(\btheta)-\tilde \Sigma_{t,j}(\btheta)\Vert_{2}$  and not $\mathbb{E}_{\btheta\sim p_t(\btheta\mid \bx_{1:n})}\Vert\Sigma_{t,j}(\btheta)-\tilde \Sigma_{t,j}(\btheta)\Vert_{2}$ as it is done in Proposition~\ref{prop:diffusion_discretization_2} for individual score errors.

\section*{Aknowledgements}
PLCR was supported by a national grant managed by the French National Research Agency (Agence Nationale de la Recherche) attributed to the SBI4C project of the MIAI AI Cluster, under the reference ANR-23-IACL-0006.

\bibliographystyle{apalike}
\bibliography{biblio}


\newpage

\appendix

\raggedbottom

\section{Technical Appendices and Supplementary Material}

\subsection{Conditional score-based modelling}\label{Appendix_A1} 
We consider a stochastic model $\mathcal{M}$, encoded in a simulator, that outputs observations $\bx$ depending on some parameters $\btheta$. We encode knowledge about the parameter space through a prior distribution $\lambda(\btheta)$ and consider the case where the likelihood $p(\bx\mid \btheta)$ of $\mathcal{M}$ is intractable. Given some specific observation $\bx_0$, the goal of Bayesian inference is to sample from the posterior distribution $p(\btheta \mid \bx_0)$ relating the parameters to this specific observation. One way to avoid likelihood evaluations while obtaining the desired samples is to run a (conditional) diffusion process, which requires learning the score of all noisy versions of the posterior distribution $\nabla_{\btheta}\log p_t(\btheta\mid \bx_0)$. As posterior samples are not directly accessible, conditional score-based modeling usually trains a conditional score estimate $s_{\phi}(\btheta,\bx,t)$ by minimizing the following loss in $\phi$ on the \textbf{joint} model $p(\btheta,\bx)=\lambda(\btheta)p(\bx\mid\btheta)$: 
\begin{equation*}
    \mathcal{L}(\phi)=\sum_{t=1}^T\gamma_t^2\mathbb{E}_{\btheta\sim \lambda(\btheta)}\mathbb{E}_{\bx \sim p(\bx\mid \btheta)}\mathbb{E}_{\btheta_t\sim q_{t|0}(\btheta_t|\btheta)}\big[\Vert s_{\phi}(\btheta_t,\bx,t)-\nabla \log q_{t|0}(\btheta_t\mid\btheta)\Vert ^2\big]
\end{equation*} where $\gamma_t$ are positive weights and $q_{t|0}$ denotes a forward diffusion kernel. This loss is the traditional denoising score matching loss for unconditional distributions \citep{sohl2015deep} averaged over the observation space. With this averaging operation it becomes easy to create a training dataset $(\btheta_i,\bx_i)\sim p(\btheta,\bx)$ and use a Monte-Carlo approximation of $\mathcal{L}$. Also, the final score estimate $s_{\phi}$ becomes a valid score approximation for individual posteriors conditioned by any $\bx$, i.e.:
\begin{equation*}
    s_{\phi}(\btheta,\bx,t)\approx \nabla_{\btheta}\log p_t(\btheta\mid \bx) \quad \forall \bx\sim p(\bx), \forall t \in [0,T], \forall \btheta\sim p_t(\btheta\mid \bx).
\end{equation*} If we need to estimate the score of $p_t(\btheta\mid \bx_1)$ where $\bx_1$ is a new observation different from $\bx_0$, it is sufficient to evaluate our estimate at $\bx_1$, i.e. $s_{\phi}(\btheta,\bx_1,t)$ without having to train another score estimate from scratch.
\subsection{Gaussian test case}
\subsubsection{Theoretical setting}\label{gaussian_ex}
We consider a Gaussian prior defined as $\lambda(\btheta)=\mathcal{N}(\btheta;\mu_{\lambda},\Sigma_{\lambda})$ and a Gaussian simulator model (or likelihood) defined as $p(\bx\mid \btheta)=\mathcal{N}(\bx;\btheta,\Sigma)$. The goal of Bayesian inference is to determine the mean $\btheta\in \mathbb{R}^d$ of the Gaussian simulator model, given some specific observation $\bx_0$. This boils down to sampling from the posterior distribution that is also Gaussian 
\begin{equation*}
    p(\btheta\mid \bx_0)=\mathcal{N}(\btheta;\mu_\mathrm{post}(\bx_0),\Sigma_\mathrm{post}),
\end{equation*} where
$\Sigma_\mathrm{post}=(\Sigma^{-1}+\Sigma^{-1}_{\lambda})^{-1}$ and $\mu_\mathrm{post}(\bx_0)=\Sigma_\mathrm{post}(\Sigma^{-1}\bx_0+\Sigma_{\lambda}^{-1}\mu_{\lambda})$.\\
When we have multiple IID observations  $\bx_1,\ldots,\bx_n$ (generated by the same $\btheta$), the multi-observation posterior remains Gaussian:
\begin{equation*}
    p(\btheta\mid \bx_{1:n})=\mathcal{N}(\btheta;\mu_\mathrm{post}(\bx_{1:n}),\Sigma_{\mathrm{post}}(n)),
\end{equation*} where $\Sigma_{\mathrm{post}}(n)=(n\Sigma^{-1}+\Sigma^{-1}_{\lambda})^{-1}$ and $\mu_\mathrm{post}(\bx_{1:n})=\Sigma_{\mathrm{post}}(n)\big(\sum_{i=1}^n\Sigma^{-1}\bx_i+\Sigma^{-1}_{\lambda}\mu_{\lambda}\big)$.
If we follow a Variance-Preserving diffusion process, then the forward transition kernels read as follows:
\begin{equation*}
    q_{t|0}(\btheta_t\mid \btheta_0)=\mathcal{N}(\btheta_t;\sqrt{\alpha_t}\btheta_0,(1-\alpha_t)I) \quad \forall t\in[0,T].
\end{equation*}
We can then analytically find the expressions of the individual posterior scores and prior scores over time as well as the true multi-observation posterior scores (see Appendix D in \cite{linhart2024diffusion}):
\begin{align*}
    \nabla_{\btheta}\log p_t(\btheta\mid \bx_i)&=-\big(\alpha_t\Sigma_\mathrm{post}+(1-\alpha_t)I\big)^{-1}(\btheta-\sqrt{\alpha_t}\mu_\mathrm{post}(\bx_i)),\\
    \nabla_{\btheta}\log \lambda_t(\btheta)&=-\big(\alpha_t\Sigma_{\lambda}+(1-\alpha_t)I\big)^{-1}(\btheta-\sqrt{\alpha_t}\mu_{\lambda}),\\
     \nabla_{\btheta}\log p_t(\btheta\mid \bx_{1:n})&=-\big(\alpha_t\Sigma_\mathrm{post}(n)+(1-\alpha_t)I\big)^{-1}(\btheta-\sqrt{\alpha_t}\mu_\mathrm{post}(\bx_{1:n})).
\end{align*}
\subsubsection{Numerical illustration}\label{numerical_illustration}
We propose a numerical illustration that aims at testing the quality of our upper bound with respect to one source of error, leaving the others fixed. We use the theoretical setting described in Appendix~\ref{gaussian_ex} where the dimension $d=2$.

In panel (A) of Figure~\ref{fig:bounds}, we fix the number of conditional observations $n$ and assume exact individual scores ($\epsilon_{\mathrm{DSM}}^2=0$) : the sole source of error comes from the precision estimation error $\epsilon$. In practice, it can be impacted by the choice of discretization parameters during the backward process and the number of samples used to compute each empirical precision matrix. 

In panel (B), we fix $\epsilon_{\mathrm{DSM}}^2$ and let $n$ grows from $1$ to $50$. The precision error $\epsilon$ used in the bound is fixed but chosen according to $\epsilon_{\mathrm{DSM}}^2$ : we empirically compute the precision error of the $50$ diffusion processes (one per conditional observation) and take the largest value.

In panel (C), we fix $n$ and $\epsilon$ and study the evolution with respect to $\epsilon_{\mathrm{DSM}}^2$. Note that $\epsilon$ used in the theoretical bound corresponds to the largest precision error across all diffusion processes (each of them is specific to a conditional observation and uses a score estimate with a specific error level $\epsilon_{\mathrm{DSM}}^2$).

In Figure~\ref{fig:bounds}, we do not try to assess the quality of the precision bound given in Proposition~\ref{prop:diffusion_discretization_1} but rather the quality of the compositional score error given in Proposition~\ref{prop:diffusion_discretization_2} : in panel (A), we let $\epsilon$ runs over a specific range of values (that can be empirically encountered) and in the last two panels, we set the score errors (fixed or not) and make a fixed choice of discretization parameters, which led to Wasserstein errors during the individual diffusion processes slightly too high, thus not satisfying the assumptions of Proposition~\ref{prop:diffusion_discretization_1}. 
\stepcounter{mainsec}
\subsection{Proof of Proposition~\ref{prop:diffusion_discretization_1}}
\subsubsection{Intermediate results}
We first begin with some lemmas that will be useful to derive the proof of Proposition~\ref{prop:diffusion_discretization_1}. The following lemma as well as the corresponding proof is inspired from Theorem 4.1 in \cite{wass_cov_bound}.
\begin{lemma}\label{lemma_101}
    Let $p$ and $\tilde p$ be two probability distributions with corresponding covariance matrices $\Sigma$, respectively $\tilde \Sigma$. Suppose that $\mathcal{W}_2(p,\tilde p)\leq \eta$ with $0<\eta<\sqrt{\frac{\Vert\Sigma\Vert}{2}}$. 
    Then it holds
    \begin{equation*}
        \Vert\Sigma-\tilde\Sigma\Vert \leq\left(2\sqrt{2}\eta\Vert\Sigma\Vert^{\frac 1 2}+\eta^2(\sqrt{2}+1)\right)\left(\frac{\Vert\Sigma\Vert^{\frac 1 2 }}{\Vert\Sigma\Vert^{\frac 1 2 }-\sqrt{2}\eta}\right).
        \end{equation*}
\end{lemma}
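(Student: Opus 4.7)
The plan is to leverage a coupling that realizes the Wasserstein-2 distance and to decompose $\tilde\Sigma - \Sigma$ into controllable cross terms, then to run a mild self-bounding step to recover the rational factor appearing in the statement.

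First, I would invoke the existence of a coupling $(X,\tilde X)$ with $X\sim p$, $\tilde X\sim \tilde p$ and $\mathbb{E}\Vert X - \tilde X\Vert^2 \le \eta^2$. Setting $Z = \tilde X - X$ and $\delta = \mathbb{E}[Z] = \tilde\mu - \mu$, Jensen's inequality immediately gives $\Vert\delta\Vert \le \eta$, and a spectral-norm version of Jensen yields $\Vert \Cov(Z)\Vert \le \mathbb{E}\Vert Z\Vert^2 \le \eta^2$. These two estimates are the only quantitative consequences of the Wasserstein assumption that I will need.

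Second, I would exploit the identity $\tilde X - \tilde\mu = (X - \mu) + (Z - \delta)$ to obtain the decomposition
\begin{equation*}
\tilde\Sigma - \Sigma \;=\; A + A^{\top} + \Cov(Z), \qquad A := \mathbb{E}\bigl[(X - \mu)(Z - \delta)^{\top}\bigr],
\end{equation*}
and bound $\Vert A\Vert$ by a spectral Cauchy--Schwarz, $\Vert A\Vert \le \Vert\Sigma\Vert^{1/2}\,\Vert \Cov(Z)\Vert^{1/2}\le\Vert\Sigma\Vert^{1/2}\eta$. This already delivers a clean preliminary inequality $\Vert\tilde\Sigma - \Sigma\Vert \le 2\Vert\Sigma\Vert^{1/2}\eta + \eta^2$, from which the more conservative bound stated in the lemma is recovered at the cost of replacing some occurrences of $\Vert\Sigma\Vert^{1/2}$ by the slightly larger $\Vert\tilde\Sigma\Vert^{1/2}$.

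Third, to match exactly the constants $2\sqrt{2}$ and $\sqrt{2}+1$ together with the rational factor $\Vert\Sigma\Vert^{1/2}/(\Vert\Sigma\Vert^{1/2} - \sqrt{2}\,\eta)$, I would (i) rewrite $X - \mu = (\tilde X - \tilde\mu) - (Z - \delta)$ so that Cauchy--Schwarz on $A$ involves $\Vert\tilde\Sigma\Vert^{1/2}$ in addition to $\Vert\Sigma\Vert^{1/2}$, and (ii) control $\Vert\tilde\Sigma\Vert^{1/2}$ self-consistently. For the latter, I would use the one-dimensional fact that for any unit vector $v$, $\bigl|\sqrt{v^{\top}\tilde\Sigma v}-\sqrt{v^{\top}\Sigma v}\bigr| \le \mathcal{W}_2(v^{\top}X, v^{\top}\tilde X)\le \eta$, leading to an estimate of the form $\Vert\tilde\Sigma\Vert^{1/2}\le \Vert\Sigma\Vert^{1/2}+\sqrt{2}\,\eta$ after appropriate regrouping. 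Injecting this into the bound on $\Vert\tilde\Sigma-\Sigma\Vert$ and isolating the unknown produces the announced rational factor, whose positivity is precisely guaranteed by the assumption $\eta<\sqrt{\Vert\Sigma\Vert/2}$.

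The main obstacle I anticipate is this last, self-bounding step: several slightly different decompositions (centered versus uncentered quantities, grouping of $\delta\delta^{\top}$ with either the cross term or $\Cov(Z)$, use of $\Vert\Sigma\Vert$ versus $\Vert\tilde\Sigma\Vert$ in Cauchy--Schwarz) yield nominally different but algebraically equivalent closed forms, and arranging them so that the constants $2\sqrt{2}$, $\sqrt{2}+1$ and the rational multiplier appear exactly as in the statement demands careful bookkeeping rather than a genuinely new idea. Everything else reduces to Cauchy--Schwarz and the standard Wasserstein-vs-variance comparison in one dimension.
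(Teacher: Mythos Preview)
Your proposal is correct, and in fact your preliminary inequality $\Vert\tilde\Sigma - \Sigma\Vert \le 2\Vert\Sigma\Vert^{1/2}\eta + \eta^2$ is already strictly tighter than the lemma's bound (since $2 < 2\sqrt{2}$, $1 < \sqrt{2}+1$, and the rational factor exceeds $1$ under the hypothesis on $\eta$), so the self-bounding step you worry about is unnecessary: you can simply observe that your bound is dominated by the stated one and stop.

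Your route differs genuinely from the paper's. The paper first treats the one-dimensional case, bounding $|\sigma^2 - \tilde\sigma^2|$ via Cauchy--Schwarz on uncentered second moments, and then lifts to the spectral norm by invoking external results (Lemma~C.3 and Corollary~C.2 of the cited reference) to reach $\Vert\Sigma - \tilde\Sigma\Vert \le 2\sqrt{2}\,\eta\max(\Vert\Sigma\Vert^{1/2},\Vert\tilde\Sigma\Vert^{1/2}) + (\sqrt{2}+1)\eta^2$. Because this still carries $\Vert\tilde\Sigma\Vert^{1/2}$ on the right, the paper genuinely needs the self-bounding step, via $\sqrt{a+b}\le\sqrt a + b/(2\sqrt a)$, and this is what produces the rational factor. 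Your direct multivariate decomposition $\tilde\Sigma - \Sigma = A + A^\top + \Cov(Z)$ together with the spectral Cauchy--Schwarz $\Vert A\Vert\le\Vert\Sigma\Vert^{1/2}\Vert\Cov(Z)\Vert^{1/2}$ lands immediately on a bound involving only $\Vert\Sigma\Vert$, so no bootstrapping is required. Your argument is shorter, self-contained, and yields sharper constants; the only thing the paper's route buys is that the one-dimensional computation makes the provenance of the $\sqrt{2}$ factors explicit.
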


\begin{proof}
    We start by assuming the latent dimension to be one, so that parameters are scalars. 
Suppose that $\btheta\sim p$ and $\tilde \btheta \sim \tilde p$ are distributed according to the optimal coupling for the 2-Wasserstein distance i.e. $\mathcal{W}_2^2( p, \tilde p)=\mathbb{E}\vert\btheta-\tilde\btheta\vert^2$. Let $\mu$ (resp. $\tilde\mu$) be the mean of $ p$ (resp. $\tilde p)$ and $\sigma$ (resp. $\tilde \sigma$) be the standard deviation of $ p$ (resp. $\tilde p$). We assume $\mu=0$ without loss of generality (otherwise, consider the variable $\btheta-\mu$) and $\mathcal{W}_2( p,\tilde p)\leq \eta$. Let $\zeta^2=\mathbb{E}(\btheta^2)=\sigma^2$ and $\tilde \zeta^2=\mathbb{E}(\tilde \btheta^2)=\tilde \sigma^2+\tilde\mu^2$.
\begin{align*}
    |\zeta^2-\tilde\zeta^2|&=|\mathbb{E}(\btheta^2-\tilde \btheta^2)| = |\mathbb{E}[(\btheta-\tilde \btheta)(\btheta+\tilde \btheta)]|\\
    &\leq \mathbb{E}[(\btheta-\tilde \btheta)^2]^{\frac 1 2}\mathbb{E}[(\btheta+\tilde \btheta)^2]^{\frac 1 2} \quad \text{by Cauchy--Schwarz}\\
    &\leq \eta \mathbb{E}[(\btheta+\tilde \btheta)^2]^{\frac 1 2} \quad \text{by optimal coupling and Wasserstein bound}\\
    &\leq \eta \mathbb{E}[2(\btheta^2+\tilde \btheta^2)]^{\frac 1 2} \quad \text{since} \ (a+b)^2\leq 2a^2+2b^2\\
    &= 2^{\frac 1 2}\eta (\mathbb{E}[\btheta^2]+\mathbb{E}[\tilde \btheta^2])^{\frac 1 2}\\
    &\leq 2^{\frac 1 2}\eta \big({\mathbb{E}[\btheta^2]}^{\frac 1 2}+{\mathbb{E}[\tilde \btheta^2]}^{\frac 1 2}\big) \quad \mathrm{since} \ \sqrt{a+b}\leq \sqrt{a}+\sqrt{b} \ \mathrm{for} \ a,b\geq0\\
    &=2^{\frac 1 2}\eta \big(\zeta+\tilde\zeta\big).
\end{align*}
From \citet{wass_cov_bound}, it is also shown that $|\mu-\tilde\mu|=|\tilde\mu|\leq \eta$. Then, 
\begin{align*}
    |\sigma^2-\tilde\sigma^2|&=|\zeta^2-\tilde\zeta^2+\tilde\mu^2|\leq |\zeta^2-\tilde\zeta^2|+|\tilde\mu^2|\\&\leq \sqrt{2}\eta \left(\zeta+\tilde\zeta\right)+\eta^2=\sqrt{2}\eta\left(\sigma+\sqrt{\tilde\sigma^2+\tilde\mu^2}\right)+\eta^2\\
    &\leq \sqrt{2}\eta\left(\sigma+\tilde\sigma+|\tilde\mu|\right)+\eta^2 \quad \mathrm{since} \ \sqrt{a+b}\leq \sqrt{a}+\sqrt{b} \ \mathrm{for} \ a,b\geq0\\
    &\leq \sqrt{2}\eta(\sigma+\tilde\sigma+\eta)+\eta^2 \\
    &= \eta^2(\sqrt{2}+1)+2^{\frac 3 2}\eta\max(\sigma,\tilde\sigma).
\end{align*}
When the dimension is greater than one, we can use Lemma C.$3$ and Corollary C.$2$ of \citet{wass_cov_bound} to extend the previous result and get:
\begin{equation*}
    \Vert\Sigma-\tilde\Sigma\Vert\leq 2^{\frac 3 2}\eta\max(\Vert\Sigma\Vert^{\frac 1 2},\Vert\tilde \Sigma\Vert^{\frac 1 2}) +\eta^2(\sqrt{2}+1).
\end{equation*}
We can pursue the computation to get rid of $\Vert\tilde \Sigma\Vert$. Indeed by triangular inequality we have:
\begin{align*}
    \Vert \tilde \Sigma\Vert^{\frac 1 2 }&\leq \left(\Vert \Sigma\Vert +\Vert \tilde \Sigma-\Sigma\Vert\right)^{\frac 1 2 }\\
    &\leq \Vert \Sigma\Vert^{\frac 1 2 } +\frac{\Vert \tilde \Sigma-\Sigma\Vert}{2\Vert\Sigma\Vert^{\frac 1 2 }} \quad \mathrm{using} \quad \sqrt{a+b}\leq \sqrt{a}+\frac{b}{2\sqrt{a}} \quad \text{for} \quad a\geq0, \ b\geq-a.
\end{align*} So we get the following bound:
\begin{align*}
    &\Vert\Sigma-\tilde\Sigma\Vert\leq 2^{\frac 3 2}\left(\Vert\Sigma\Vert^{\frac 1 2 } +\frac{\Vert \tilde \Sigma-\Sigma\Vert}{2\Vert\Sigma\Vert^{\frac 1 2 }}\right)\eta +\eta^2(\sqrt{2}+1)\\
    &\Rightarrow \left(1-\frac{\sqrt{2}}{\Vert\Sigma\Vert^{\frac 1 2 }}\eta\right)\Vert\Sigma-\tilde\Sigma\Vert \leq2\sqrt{2}\eta\Vert\Sigma\Vert^{\frac 1 2}+\eta^2(\sqrt{2}+1)\\
    &\Rightarrow \Vert\Sigma-\tilde\Sigma\Vert \leq\left(2\sqrt{2}\eta\Vert\Sigma\Vert^{\frac 1 2}+\eta^2(\sqrt{2}+1)\right)\left(\frac{\Vert\Sigma\Vert^{\frac 1 2 }}{\Vert\Sigma\Vert^{\frac 1 2 }-\sqrt{2}\eta}\right)
    \quad \mathrm{since} \quad \eta < \sqrt{\frac{\Vert\Sigma\Vert}{2}}.
\end{align*}
\end{proof}

\begin{lemma}\label{lemma_102}
    Suppose that we have $\Vert\tilde\Sigma-\Sigma\Vert\leq \gamma$ with $\gamma<\frac{1}{\Vert\Sigma^{-1}\Vert}$, then it holds
    \begin{equation*}
        \Vert\tilde\Sigma^{-1}-\Sigma^{-1}\Vert\leq \frac{\gamma\Vert\Sigma^{-1}\Vert^2}{1-\gamma\Vert\Sigma^{-1}\Vert}.
    \end{equation*}
\end{lemma}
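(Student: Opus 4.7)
The plan is to use the classical resolvent/perturbation identity together with submultiplicativity of the spectral norm. Specifically, I would start from the algebraic identity
\begin{equation*}
\tilde\Sigma^{-1} - \Sigma^{-1} = \Sigma^{-1}(\Sigma - \tilde\Sigma)\tilde\Sigma^{-1},
\end{equation*}
which one verifies by left-multiplying by $\Sigma$ and right-multiplying by $\tilde\Sigma$. Taking spectral norms and using submultiplicativity together with the hypothesis $\Vert\tilde\Sigma - \Sigma\Vert \leq \gamma$ immediately gives $\Vert\tilde\Sigma^{-1} - \Sigma^{-1}\Vert \leq \gamma\,\Vert\Sigma^{-1}\Vert\,\Vert\tilde\Sigma^{-1}\Vert$, so the whole proof reduces to controlling $\Vert\tilde\Sigma^{-1}\Vert$ in terms of the reference quantities $\Vert\Sigma^{-1}\Vert$ and $\gamma$.

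For that control, I would write the factorization $\tilde\Sigma = \Sigma\bigl(I + \Sigma^{-1}(\tilde\Sigma - \Sigma)\bigr)$, so that
\begin{equation*}
\tilde\Sigma^{-1} = \bigl(I + \Sigma^{-1}(\tilde\Sigma - \Sigma)\bigr)^{-1}\Sigma^{-1}.
\end{equation*}
The condition $\gamma\Vert\Sigma^{-1}\Vert < 1$ ensures that $\Vert\Sigma^{-1}(\tilde\Sigma - \Sigma)\Vert < 1$, so the inverse can be expanded as a Neumann series whose norm is bounded by $1/(1-\gamma\Vert\Sigma^{-1}\Vert)$. This yields
\begin{equation*}
\Vert\tilde\Sigma^{-1}\Vert \leq \frac{\Vert\Sigma^{-1}\Vert}{1 - \gamma\Vert\Sigma^{-1}\Vert}.
\end{equation*}

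Plugging this bound into the preliminary inequality gives exactly the claimed estimate
\begin{equation*}
\Vert\tilde\Sigma^{-1} - \Sigma^{-1}\Vert \leq \frac{\gamma\Vert\Sigma^{-1}\Vert^2}{1 - \gamma\Vert\Sigma^{-1}\Vert}.
\end{equation*}
There is no serious obstacle here; the only subtlety is to make sure the invertibility of $\tilde\Sigma$ is obtained as a \emph{consequence} of the Neumann argument (i.e., not assumed) and that the spectral norm is being used consistently throughout, so that the submultiplicativity step and the geometric-series bound are both valid. Once those points are checked, the proof is a three-line chain of inequalities.
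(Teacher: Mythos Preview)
Your proof is correct and follows essentially the same route as the paper's: both rely on the factorization $\tilde\Sigma=\Sigma(I+\Sigma^{-1}(\tilde\Sigma-\Sigma))$ and the Neumann-series bound $\Vert(I+A)^{-1}\Vert\le 1/(1-\Vert A\Vert)$ under $\Vert A\Vert<1$. The only cosmetic difference is that the paper writes the single identity $\tilde\Sigma^{-1}-\Sigma^{-1}=-(I+\Sigma^{-1}E)^{-1}\Sigma^{-1}E\Sigma^{-1}$ and bounds it directly, whereas you first invoke the resolvent identity and then separately bound $\Vert\tilde\Sigma^{-1}\Vert$; the ingredients and the resulting constants are identical.
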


\begin{proof}
Suppose that $\|\tilde{\Sigma} - \Sigma\| = \|E\| \leq \gamma$ and that $1/\gamma > \|\Sigma^{-1}\|$ then we can write the following:
\begin{align*}
\tilde{\Sigma}^{-1} &= (\Sigma + E)^{-1},\\
&= \Big(\Sigma (I + \Sigma^{-1}E)\Big)^{-1},\\
&= (I + \Sigma^{-1}E)^{-1}\Sigma^{-1}.
\end{align*}
Now we can write
\begin{align*}
\tilde{\Sigma}^{-1} - \Sigma^{-1} &= (I + \Sigma^{-1}E)^{-1}\Sigma^{-1} - \Sigma^{-1} \\
&=\Big((I + \Sigma^{-1}E)^{-1} - I\Big)\Sigma^{-1} \\
&= - (I + \Sigma^{-1}E)^{-1}\Sigma^{-1}E\Sigma^{-1} \quad \mathrm{since} \quad I=(I+\Sigma^{-1}E)^{-1}(I+\Sigma^{-1}E).
\end{align*}
We will use the following lemma:
\begin{lemma}[\citealp{prec_bound}]
For any matrix, $A\in \mathcal{M}_n(\mathrm{C})$, with $\Vert A\Vert<1$. The matrix $(I-A)$ is invertible and 
\begin{equation*}
    \Vert(I-A)^{-1}\Vert\leq \frac{1}{1-\Vert A\Vert}.
\end{equation*}
\end{lemma}
\noindent Since $\Vert\Sigma^{-1}E\Vert\leq \Vert\Sigma^{-1}\Vert\Vert E\Vert\leq \gamma\Vert\Sigma^{-1}\Vert<1$ by assumption, we can write
\begin{equation*}
    \|(I + \Sigma^{-1}E)^{-1}\| \leq \dfrac{1}{1 - \|\Sigma^{-1}E\|}.
\end{equation*}
\noindent As such, we can bound the norms as follows:
\begin{align*}
\|\tilde{\Sigma}^{-1} - \Sigma^{-1}\| &\leq \|(I + \Sigma^{-1}E)^{-1}\|~\|\Sigma^{-1}\|~\|E\|~\|\Sigma^{-1}\|\\
&\leq \dfrac{\gamma \|\Sigma^{-1}\|^2}{1 - \gamma\|\Sigma^{-1}\|}.
\end{align*}
\end{proof}
\subsubsection{Proof of Proposition~\ref{prop:diffusion_discretization_1}}\label{proof_prop_1}
We are now ready to derive a proof of Proposition~\ref{prop:diffusion_discretization_1} that we restate below.

\textbf{Proposition~\ref{prop:diffusion_discretization_1} (Precision matrix error)}
\textit{
\statementprecisionmatrix 
and function $f:\mathbb{R}_+^2\to\mathbb{R}_+$ is defined in the following proof.}

\begin{proof}
To improve readability, we denote $m=\sqrt{\Vert\Sigma\Vert}$.
Thanks to Lemma~\ref{lemma_101} and since $\eta<\frac{m}{\sqrt{2}}$ we know that $\Vert\Sigma-\tilde\Sigma\Vert \leq\left(2\sqrt{2}\eta m+\eta^2(\sqrt{2}+1)\right)\left(\frac{m}{m-\sqrt{2}\eta}\right)$ if $\mathcal{W}_2(p,\tilde p)\leq \eta$. 

Let $g(\eta)=\eta^2(\sqrt{2}+1)+2\sqrt{2}\eta m-\frac{1}{\Vert\Sigma^{-1}\Vert}\left(1-\frac{\sqrt{2}\eta}{m}\right)=\eta^2(\sqrt{2}+1)+\sqrt{2}\eta(2m+\frac{1}{m\Vert\Sigma^{-1}\Vert})-\frac{1}{\Vert\Sigma^{-1}\Vert}$.

Let $\gamma=\left(2\sqrt{2}\eta m+\eta^2(\sqrt{2}+1)\right)\left(\frac{m}{m-\sqrt{2}\eta}\right)>0$. If we want $\gamma< \frac{1}{\Vert\Sigma^{-1}\Vert}$, we need to solve (in $\eta$) the inequality $g(\eta)< 0$. Let $\Delta=2\left(2m+\frac 1 {m\Vert\Sigma^{-1}\Vert}\right)^2+\frac{4(\sqrt{2}+1)}{\Vert\Sigma^{-1}\Vert}$ be the discriminant of $g$. As it is nonnegative, $g$ admits exactly $2$ roots $\eta_{-}$ and $\eta_{+}$ and $g(\eta)< 0$ for $\eta \in (\eta_{-},\eta_{+})$. After simple computations, we get:
\begin{equation*}
    \eta_{\pm} = \frac{-\sqrt{2}(2m+\frac{1}{m\Vert\Sigma^{-1}\Vert})\pm\sqrt{\Delta}}{2(1+\sqrt{2})},
\end{equation*} 
with $\eta_{-}\leq 0 \leq \eta_{+}$. 
As $\eta$ should be positive, it is sufficient to select $\eta < \eta_{+}=\frac{-\sqrt{2}\left(2\sqrt{\Vert\Sigma\Vert}+\frac{1}{\sqrt{\Vert\Sigma\Vert}\Vert\Sigma^{-1}\Vert}\right)+\sqrt{\Delta}}{2(1+\sqrt{2})}:=f(\Vert\Sigma\Vert,\Vert\Sigma^{-1}\Vert)$ to solve the inequality. This ensures that $\gamma<\frac{1}{\Vert\Sigma^{-1}\Vert}$. Therefore, we can use Lemma~\ref{lemma_102} to get the last inequality of the statement.
\end{proof}
\stepcounter{mainsec}
\subsection{Time-independence of precision errors}\label{intermediate_results}
We show here that the precision estimation error $\epsilon$ is not time dependent, contrary to all other constants described in Proposition~\ref{prop:diffusion_discretization_2}. The algorithm \texttt{GAUSS} assumes that all individual posteriors are Gaussian of the form $p(\btheta\mid \bx_j)=\mathcal{N}(\mu_j(\btheta),\Sigma_j)$ for all $j=1,\ldots,n$. We consider a Variance-Preserving diffusion process that noises these individual posteriors along a forward path using the following Gaussian kernels $q_{t|0}(\btheta_t\mid\btheta)=\mathcal{N}(\sqrt{\alpha_t}\btheta,(1-\alpha_t)I)$. Then known results (e.g. Equation 2.115
in \citealp{bishop}) ensure that $p(\btheta\mid\btheta_t,\bx_j)=\mathcal{N}(\mu_{t,j},\Sigma_{t,j})$ where $\Sigma_{t,j}=(\Sigma_{j}^{-1}+\frac{\alpha_t}{1-\alpha_t}I)^{-1}$. Thus,
\begin{align*}
    \Vert\tilde \Sigma_{t,j}^{-1}-\Sigma_{t,j}^{-1}\Vert&=\Vert\tilde \Sigma_{j}^{-1}+\frac{\alpha_t}{1-\alpha_t}I-(\Sigma_{j}^{-1}+\frac{\alpha_t}{1-\alpha_t}I)\Vert\\
    &=\Vert\tilde \Sigma_{j}^{-1}-\Sigma_{j}^{-1}\Vert\\
    &\leq \epsilon \quad \text{by assumption on precision estimation error}.
\end{align*}
This shows that $\Vert\tilde \Sigma_{t,j}^{-1}-\Sigma_{t,j}^{-1}\Vert$ is bounded uniformly in time by the initial precision estimation error $\epsilon$.
\subsection{Proof of Proposition~\ref{prop:diffusion_discretization_2} - detailed version}
\subsubsection{Proof of intermediate results on compositional score error}
\begin{lemma}\label{lemma_201}
    Suppose that $\Vert\tilde \Sigma^{-1}_{t,i} -\Sigma^{-1}_{t,i} \Vert\leq \epsilon$ for $i=1,\ldots,n$ and $\Vert\tilde \Sigma^{-1}_{t,\lambda} -\Sigma^{-1}_{t,\lambda} \Vert\leq \epsilon_{\lambda}$ then it holds 
  \begin{equation*}
      \Vert\Lambda -\tilde \Lambda \Vert\leq (n-1)\epsilon_{\lambda}+n\epsilon.
  \end{equation*}
\end{lemma}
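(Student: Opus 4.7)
The plan is to prove this bound by direct expansion of the definition of $\Lambda$ followed by the triangle inequality for the spectral norm, so no sophisticated machinery is required. First I would recall that by definition $\Lambda = \sum_{j=1}^n \Sigma^{-1}_{t,j} - (n-1)\Sigma^{-1}_{t,\lambda}$, and analogously $\tilde\Lambda = \sum_{j=1}^n \tilde\Sigma^{-1}_{t,j} - (n-1)\tilde\Sigma^{-1}_{t,\lambda}$, so that the difference decomposes cleanly as
\begin{equation*}
\Lambda - \tilde\Lambda = \sum_{j=1}^n \bigl(\Sigma^{-1}_{t,j} - \tilde\Sigma^{-1}_{t,j}\bigr) \;-\; (n-1)\bigl(\Sigma^{-1}_{t,\lambda} - \tilde\Sigma^{-1}_{t,\lambda}\bigr).
\end{equation*}

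Next I would apply the triangle inequality for the spectral norm to the right-hand side, producing $\sum_{j=1}^n \Vert \Sigma^{-1}_{t,j} - \tilde\Sigma^{-1}_{t,j}\Vert + (n-1)\Vert \Sigma^{-1}_{t,\lambda} - \tilde\Sigma^{-1}_{t,\lambda}\Vert$. Substituting the two hypotheses $\Vert\Sigma^{-1}_{t,j} - \tilde\Sigma^{-1}_{t,j}\Vert \leq \epsilon$ for every $j$ and $\Vert\Sigma^{-1}_{t,\lambda} - \tilde\Sigma^{-1}_{t,\lambda}\Vert \leq \epsilon_\lambda$ yields the announced bound $n\epsilon + (n-1)\epsilon_\lambda$ immediately.

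There is no real obstacle here: the argument is purely algebraic and relies only on the linearity of $\Lambda$ in the precision matrices and on sub-additivity of $\Vert\cdot\Vert$. The only implicit point worth flagging in the write-up is that the definition of $\tilde\Lambda$ mirrors that of $\Lambda$ with every precision matrix replaced by its empirical counterpart (which is the natural convention used in~(\ref{eq:score-estimate})); once this is stated the computation takes a single line.
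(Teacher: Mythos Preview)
Your proposal is correct and essentially identical to the paper's own proof: expand $\Lambda-\tilde\Lambda$ from its definition, apply the triangle inequality term by term, and plug in the assumed bounds. The only cosmetic difference is that the paper additionally remarks ``since $n\geq 1$'' to justify $|1-n|=n-1$, which you handle implicitly by writing the prior term as $-(n-1)(\Sigma^{-1}_{t,\lambda}-\tilde\Sigma^{-1}_{t,\lambda})$.
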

\begin{proof}

Recall that
\begin{align*}
    \Lambda  &= (1 - n)\Sigma^{-1}_{
t,\lambda}  + \sum_{
i=1}^n\Sigma_{t,i}^{-1},
\end{align*} where we omit the dependence of $\Lambda$ in $\btheta$ since no covariance matrix depends explicitly in $\btheta$ with the \texttt{GAUSS} algorithm. Then, 
\begin{align*}
\Vert\Lambda -\tilde \Lambda \Vert&=\Bigg\|(1-n)\left(\Sigma^{-1}_{
t,\lambda} - \tilde \Sigma^{-1}_{
t,\lambda} \right)+ \sum_{i=1}^n \left(\Sigma_{t,i}^{-1} -\tilde \Sigma_{t,i}^{-1} \right)\Bigg\|\\
&\leq (n-1)\Vert\Sigma^{-1}_{
t,\lambda} - \tilde \Sigma^{-1}_{
t,\lambda} \Vert + \sum_{i=1}^n \Vert\Sigma_{t,i}^{-1} -\tilde \Sigma_{t,i}^{-1} \Vert \ \ \ \mathrm{since} \ \ n\geq 1\\
&\leq (n-1)\epsilon_{\lambda}+n\epsilon.
\end{align*}
\end{proof}
\begin{corollary}\label{corollary_1}
    Let $\epsilon>0$ and $\epsilon_{\lambda}>0$ such that $(n-1)\epsilon_{\lambda}+n\epsilon < \frac{1}{\Vert\Lambda^{-1}\Vert}$. Then,
    \begin{equation*}
        \Vert\tilde\Lambda^{-1} -\Lambda^{-1} \Vert\leq \frac{((n-1)\epsilon_{\lambda}+n\epsilon)\Vert\Lambda^{-1} \Vert^2}{1-\left((n-1)\epsilon_{\lambda}+n\epsilon\right)\Vert\Lambda^{-1} \Vert}.
    \end{equation*}
\end{corollary}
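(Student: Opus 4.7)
The plan is to apply the two perturbation lemmas already established in the paper in direct sequence. The corollary is essentially a two-line consequence of Lemma~\ref{lemma_201} (which controls the additive perturbation $\|\Lambda - \tilde\Lambda\|$) combined with Lemma~\ref{lemma_102} (the Neumann-series style bound for the inverse of a perturbed matrix). So the proof will mostly consist of verifying the assumption of Lemma~\ref{lemma_102} and then reading off the resulting estimate.

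More concretely, I would first set $\gamma := (n-1)\epsilon_{\lambda} + n\epsilon$ and invoke Lemma~\ref{lemma_201} to obtain $\|\Lambda - \tilde{\Lambda}\| \leq \gamma$. Then I would check the hypothesis required by Lemma~\ref{lemma_102}, namely $\gamma \|\Lambda^{-1}\| < 1$, which holds by the standing assumption $(n-1)\epsilon_{\lambda} + n\epsilon < 1/\|\Lambda^{-1}\|$. Applying Lemma~\ref{lemma_102} with $\Sigma$ replaced by $\Lambda$ and $\tilde{\Sigma}$ by $\tilde{\Lambda}$ then yields
\begin{equation*}
\|\tilde{\Lambda}^{-1} - \Lambda^{-1}\| \leq \frac{\gamma \|\Lambda^{-1}\|^2}{1 - \gamma \|\Lambda^{-1}\|},
\end{equation*}
which, after substituting back the value of $\gamma$, is exactly the claimed bound.

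There is no real obstacle here: both ingredients are already proved earlier in the appendix, and Lemma~\ref{lemma_102} was stated for a generic covariance-like matrix, so its reuse for $\Lambda$ requires no extra work beyond noting that $\Lambda$ and $\tilde{\Lambda}$ are symmetric matrices with a well-defined inverse (which is ensured by the assumption $\gamma < 1/\|\Lambda^{-1}\|$ via the standard argument that $I + \Lambda^{-1}(\tilde\Lambda - \Lambda)$ is invertible by a Neumann series). The only point worth mentioning explicitly in the proof is that the monotonicity of $x \mapsto x/(1-x\|\Lambda^{-1}\|)$ on $[0, 1/\|\Lambda^{-1}\|)$ lets us replace the exact value $\|\Lambda - \tilde{\Lambda}\|$ by its upper bound $\gamma$ without weakening the estimate.
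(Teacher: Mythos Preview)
Your proposal is correct and matches the paper's own proof, which simply states ``We only apply Lemma~\ref{lemma_102} with the correct assumptions.'' You spell out explicitly the two steps (Lemma~\ref{lemma_201} to bound $\|\Lambda-\tilde\Lambda\|$, then Lemma~\ref{lemma_102} for the inverse perturbation), which is exactly the intended argument; the extra remark about monotonicity is harmless but unnecessary since Lemma~\ref{lemma_102} is already stated with an upper bound $\gamma$ on the perturbation.
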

\begin{proof}
We only apply Lemma~\ref{lemma_102} with the correct assumptions.
\end{proof}

\noindent In the following, we set $s_{\phi}(\btheta,\bx_j,t):=s_j(\btheta,t)$ and let  $s_{\lambda}(\btheta,t)\approx\nabla_{\btheta}\log \lambda_t(\btheta)$ for more clarity (recall that in this detailed version of Proposition~\ref{prop:diffusion_discretization_2} we assume that prior scores are unknown and need to be approximated). Thus, the linear combination of individual posterior scores involved in the compositional score expression~(\ref{eq:compositional-score}) and its estimate in (\ref{eq:score-estimate}) reads as follows:
\begin{align*}
\Gamma(\btheta,t;\bx_{1:n}) &= (1 - n) \Sigma^{-1}_{
t,\lambda}\nabla\log \lambda_{t}  + \sum_{i=1}^n \Sigma^{-1}_{t,i}\nabla\log p_t(\btheta\mid \bx_i),\\
    \tilde \Gamma(\btheta,t;\bx_{1:n}) &= (1 - n)\tilde \Sigma^{-1}_{
t,\lambda}s_{\lambda}(\btheta,t) + \sum_{i=1}^n \tilde \Sigma^{-1}_{t,i}s_i(\btheta,t).
\end{align*}

\begin{lemma}\label{lemma_202}
    Suppose that 
\begin{itemize}
    \item $\Vert\tilde \Sigma^{-1}_{
t,\lambda}-\Sigma^{-1}_{
t,\lambda}\Vert\leq \epsilon_{\lambda}$ and $\mathbb{E}_{\btheta \sim p_t(\cdot |\bx_{1:n})}(\Vert s_{\lambda}(\btheta,t)-\nabla_{\btheta}\log \lambda_{t}(\btheta)\Vert^2)\leq \epsilon_{\mathrm{DSM},\lambda}^2$,
\item $\Vert\tilde \Sigma^{-1}_{t,j}-\Sigma^{-1}_{t,j}\Vert\leq \epsilon$ and $\mathbb{E}_{\btheta \sim p_t(\cdot |\bx_{1:n})}(\Vert s_j(\btheta,t)-\nabla_{\btheta}\log p_t(\btheta\mid x_j)\Vert^2)\leq \epsilon_{\mathrm{DSM}}^2$,  
\end{itemize} 
for all $j=1,\ldots,n$. Then it holds
    \begin{align*}
         \mathbb{E}_{p_t(\cdot |\bx_{1:n})}(\Vert\tilde \Gamma(\btheta,t;\bx_{1:n}) &-\Gamma(\btheta\,t;\bx_{1:n})\Vert^2)\\
         &\leq \left[(n-1)\left(\epsilon_{\lambda}\sqrt{\mathbb{E}_{p_t(\cdot |\bx_{1:n})}(\Vert s_{\lambda}(\btheta,t)\Vert^2)}+\Vert\Sigma^{-1}_{
t,\lambda}\Vert\epsilon_{\mathrm{DSM},\lambda}\right)\color{white}\sum \color{black}\right.\\
&\left.+\sum_{i=1}^n\left(\epsilon \sqrt{\mathbb{E}_{p_t(\cdot |\bx_{1:n})}(\Vert s_i(\btheta,t)\Vert^2)}+\Vert\Sigma^{-1}_{t,i}\Vert\epsilon_{\mathrm{DSM}}\right)\right]^2.
    \end{align*}
\end{lemma}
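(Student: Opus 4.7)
The plan is to turn $\mathbb{E}_{p_t(\cdot\mid\bx_{1:n})}\|\tilde\Gamma-\Gamma\|^2$ into the square of a sum of $n+1$ contributions by combining the triangle inequality, sub-multiplicativity of the spectral norm, and the $L^2$ triangle (Minkowski) inequality, then bounding each summand with the precision and DSM assumptions.

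First, I would rewrite $\tilde\Gamma(\btheta,t;\bx_{1:n})-\Gamma(\btheta,t;\bx_{1:n})$ as $(1-n)\bigl[\tilde\Sigma^{-1}_{t,\lambda}s_\lambda(\btheta,t) - \Sigma^{-1}_{t,\lambda}\nabla_{\btheta}\log\lambda_t(\btheta)\bigr] + \sum_{i=1}^n \bigl[\tilde\Sigma^{-1}_{t,i}s_i(\btheta,t) - \Sigma^{-1}_{t,i}\nabla_{\btheta}\log p_t(\btheta\mid\bx_i)\bigr]$, then apply Minkowski's inequality in $L^2(p_t(\cdot\mid\bx_{1:n}))$ to bound $\sqrt{\mathbb{E}\|\tilde\Gamma-\Gamma\|^2}$ by the sum of the $L^2$-norms of each bracket, carrying the prefactor $|1-n|=n-1$ (valid since $n\geq 1$) on the prior term.

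Second, for every bracketed quantity of the form $\tilde A s' - A\nabla$ (with $A$ a precision matrix, $s'$ an approximate score and $\nabla$ the corresponding true score), I use the asymmetric decomposition $\tilde A s' - A\nabla = (\tilde A - A)s' + A(s'-\nabla)$ rather than the alternative splitting $\tilde A(s'-\nabla)+(\tilde A - A)\nabla$. This specific choice is what produces the norm of the \emph{approximate} score $\sqrt{\mathbb{E}\|s'\|^2}$ in the statement instead of that of the true one, matching the shape of the claimed bound. Pointwise sub-multiplicativity of the spectral norm yields $\|\tilde A s' - A\nabla\|\leq \|\tilde A - A\|\,\|s'\| + \|A\|\,\|s'-\nabla\|$, and a second application of Minkowski together with the hypotheses gives a per-bracket bound of $\|\tilde A - A\|\sqrt{\mathbb{E}\|s'\|^2} + \|A\|\,\epsilon_{\mathrm{DSM},\cdot}$.

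Finally, substituting the assumed precision bounds $\|\tilde\Sigma^{-1}_{t,\lambda}-\Sigma^{-1}_{t,\lambda}\|\leq\epsilon_\lambda$ and $\|\tilde\Sigma^{-1}_{t,i}-\Sigma^{-1}_{t,i}\|\leq\epsilon$ into each of the $n+1$ bracket-bounds and squaring yields the claim. The main obstacle is really just bookkeeping: there is no delicate analytic step beyond Minkowski and sub-multiplicativity, but one must choose the asymmetric decomposition above carefully so that the score residual $s'-\nabla$ is paired with the true precision norm $\|A\|$ while the precision residual $\tilde A - A$ is paired with the approximate score $s'$, which is precisely what lets one invoke the DSM hypothesis on the former term and the precision hypothesis on the latter.
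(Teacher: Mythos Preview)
Your proposal is correct and follows essentially the same approach as the paper: the identical asymmetric splitting $(\tilde A-A)s'+A(s'-\nabla)$ is used, and the only cosmetic difference is that the paper expands the squares explicitly and bounds the cross terms via Cauchy--Schwarz rather than invoking Minkowski's inequality directly, which amounts to the same computation written more verbosely.
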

\begin{proof}
We first show intermediate results that will be useful for the proof. In the following the expectations are taken over $\btheta$ drawn from the diffused multi-observation posterior $p_t(\btheta\mid \bx_{1:n})$ and we will only use $\mathbb{E}_{\btheta}$.

\underline{Result (R1)}:
\begin{align*}
\mathbb{E}_{\btheta }&\left(\big\Vert(1 - n)\left(\tilde \Sigma^{-1}_{
t,\lambda}s_{\lambda}(\btheta,t)-\Sigma^{-1}_{
t,\lambda}\nabla_{\btheta}\log \lambda_t(\btheta)\right)\big\Vert^2\right)\\
&\leq (1-n)^2\mathbb{E}_{\btheta}\left(\Vert\tilde \Sigma^{-1}_{
t,\lambda}s_{\lambda}(\btheta,t)-\Sigma^{-1}_{
t,\lambda}s_{\lambda}(\btheta,t)\Vert+\Vert\Sigma^{-1}_{
t,\lambda}s_{\lambda}(\btheta,t)-\Sigma^{-1}_{
t,\lambda}\nabla_{\btheta}\log \lambda_t(\btheta)\Vert\right)^2 \\
&\quad \text{triangular inequality}\\
&= (1-n)^2\left(\mathbb{E}_{\btheta}\Vert\tilde \Sigma^{-1}_{
t,\lambda}s_{\lambda}(\btheta,t)-\Sigma^{-1}_{
t,\lambda}s_{\lambda}(\btheta,t)\Vert^2+\mathbb{E}_{\btheta}\Vert\Sigma^{-1}_{
t,\lambda}s_{\lambda}(\btheta,t)-\Sigma^{-1}_{
t,\lambda}\nabla_{\btheta}\log \lambda_t(\btheta)\Vert^2\right)\\
&\ \ \ + 2(1-n)^2\mathbb{E}_{\btheta}\left(\Vert\tilde \Sigma^{-1}_{
t,\lambda}s_{\lambda}(\btheta,t)-\Sigma^{-1}_{
t,\lambda}s_{\lambda}(\btheta,t)\Vert\Vert\Sigma^{-1}_{
t,\lambda}s_{\lambda}(\btheta,t)-\Sigma^{-1}_{
t,\lambda}\nabla_{\btheta}\log \lambda_t(\btheta)\Vert\right) \\
&\quad \text{expand the square}\\
&\leq (1-n)^2\Vert\tilde \Sigma^{-1}_{
t,\lambda}-\Sigma^{-1}_{
t,\lambda}\Vert^2 \mathbb{E}_{\btheta} (\Vert s_{\lambda}(\btheta,t)\Vert^2)+(1-n)^2\Vert\Sigma^{-1}_{
t,\lambda}\Vert^2\mathbb{E}_{\btheta}\Vert s_{\lambda}(\btheta,t)-\nabla_{\btheta}\log \lambda_t(\btheta)\Vert^2 \\
&\ \ \ + 2(1-n)^2\sqrt{\mathbb{E}_{\btheta}(\Vert\tilde \Sigma^{-1}_{
t,\lambda}s_{\lambda}(\btheta,t)-\Sigma^{-1}_{
t,\lambda}s_{\lambda}(\btheta,t)\Vert^2)}\sqrt{\mathbb{E}_{\btheta}(\Vert\Sigma^{-1}_{
t,\lambda}s_{\lambda}(\btheta,t)-\Sigma^{-1}_{
t,\lambda}\nabla_{\btheta}\log \lambda_t(\btheta)\Vert^2)} \\
&\ \ \ \text{norm inequality and Cauchy--Schwarz inequality}\\
&\leq (1-n)^2\epsilon_{\lambda}^2 \mathbb{E}_{\btheta} (\Vert s_{\lambda}(\btheta,t)\Vert^2)+(1-n)^2\Vert\Sigma^{-1}_{
t,\lambda}\Vert^2\epsilon_{\mathrm{DSM},\lambda}^2\\
&\ \ \ +2(1-n)^2\Vert\tilde \Sigma^{-1}_{
t,\lambda}- \Sigma^{-1}_{
t,\lambda}\Vert\sqrt{\mathbb{E}_{\btheta}(\Vert s_{\lambda}(\btheta,t)\Vert^2)}\Vert\Sigma^{-1}_{
t,\lambda}\Vert\sqrt{\mathbb{E}_{\btheta}(\Vert s_{\lambda}(\btheta,t)-\nabla_{\btheta}\log \lambda_t(\btheta)\Vert^2)}\\
&\leq (1-n)^2\epsilon_{\lambda}^2 \mathbb{E}_{\btheta} (\Vert s_{\lambda}(\btheta,t)\Vert^2)+(1-n)^2\Vert\Sigma^{-1}_{
t,\lambda}\Vert^2\epsilon_{\mathrm{DSM},\lambda}^2\\
&\ \ \ +2(1-n)^2\epsilon_{\lambda}\sqrt{\mathbb{E}_{\btheta}(\Vert s_{\lambda}(\btheta,t)\Vert^2)}\Vert\Sigma^{-1}_{
t,\lambda}\Vert\epsilon_{\mathrm{DSM},\lambda}\\
&=(1-n)^2\left(\epsilon_{\lambda}\sqrt{\mathbb{E}_{\btheta}(\Vert s_{\lambda}(\btheta,t)\Vert^2)}+\Vert\Sigma^{-1}_{
t,\lambda}\Vert\epsilon_{\mathrm{DSM},\lambda}\right)^2 \refstepcounter{result}.    
\tag{\result}              
\label{eq:result_1}  
\end{align*}
In the following results, we will first consider $n$ precision errors $\epsilon_1,\ldots,\epsilon_n$ that we will finally bound by the same constant $\epsilon$ at the end of the proof.

\underline{Result (R2)}:
\begin{align*}
    \mathbb{E}_{\btheta }&\left(\Vert\tilde\Sigma^{-1}_{t,i}s_i(\btheta,t)-\Sigma^{-1}_{t,i}\nabla_{\btheta}\log p_t(\btheta\mid \bx_i)\Vert^2\right)\\
    &\leq \mathbb{E}_{\btheta}\left(\Vert\tilde\Sigma^{-1}_{t,i}s_i(\btheta,t)-\Sigma^{-1}_{t,i}s_i(\btheta,t)\Vert+\Vert\Sigma^{-1}_{t,i}s_i(\btheta,t)-\Sigma^{-1}_{t,i}\nabla_{\btheta}\log p_t(\btheta\mid \bx_i)\Vert\right)^2 \\
    &\quad \text{triangular inequality}\\
    &= \mathbb{E}_{\btheta}\Vert\tilde\Sigma^{-1}_{t,i}s_i(\btheta,t)-\Sigma^{-1}_{t,i}s_i(\btheta,t)\Vert^2 + \mathbb{E}_{\btheta}\Vert\Sigma^{-1}_{t,i}s_i(\btheta,t)-\Sigma^{-1}_{t,i}\nabla_{\btheta}\log p_t(\btheta\mid \bx_i)\Vert^2\\
    &\ \ \ +2\mathbb{E}_{\btheta}\left(\Vert\tilde\Sigma^{-1}_{t,i}s_i(\btheta,t)-\Sigma^{-1}_{t,i}s_i(\btheta,t)\Vert\Vert\Sigma^{-1}_{t,i}s_i(\btheta,t)-\Sigma^{-1}_{t,i}\nabla_{\btheta}\log p_t(\btheta\mid \bx_i)\Vert\right)\\
    &\leq \Vert\tilde\Sigma^{-1}_{t,i}-\Sigma^{-1}_{t,i}\Vert^2\mathbb{E}_{\btheta}\Vert s_i(\btheta,t)\Vert^2 + \Vert\Sigma^{-1}_{t,i}\Vert^2\mathbb{E}_{\btheta}\Vert s_i(\btheta,t)-\nabla_{\btheta}\log p_t(\btheta\mid \bx_i)\Vert^2\\
    &\ \ \ +2\sqrt{\mathbb{E}_{\btheta}(\Vert\tilde\Sigma^{-1}_{t,i}s_i(\btheta,t)-\Sigma^{-1}_{t,i}s_i(\btheta,t)\Vert^2)}\sqrt{\mathbb{E}_{\btheta}(\Vert\Sigma^{-1}_{t,i}s_i(\btheta,t)-\Sigma^{-1}_{t,i}\nabla_{\btheta}\log p_t(\btheta\mid \bx_i)\Vert^2)}\\
    &\leq \epsilon_i^2\mathbb{E}_{\btheta}(\Vert s_i(\btheta,t)\Vert^2) + \Vert\Sigma^{-1}_{t,i}\Vert^2\epsilon_{\mathrm{DSM}}^2\\
    &\ \ \ +2\Vert\tilde\Sigma^{-1}_{t,i}-\Sigma^{-1}_{t,i}\Vert\sqrt{\mathbb{E}_{\btheta}(\Vert s_i(\btheta,t)\Vert^2)}\Vert\Sigma^{-1}_{t,i}\Vert\sqrt{\mathbb{E}_{\btheta}(\Vert s_i(\btheta,t)-\nabla_{\btheta}\log p_t(\btheta\mid \bx_i)\Vert^2)}\\
    &\leq \epsilon_i^2\mathbb{E}_{\btheta}(\Vert s_i(\btheta,t)\Vert^2) + \Vert\Sigma^{-1}_{t,i}\Vert^2\epsilon_{\mathrm{DSM}}^2\\
    &\ \ \ +2\epsilon_i\sqrt{\mathbb{E}_{\btheta}(\Vert s_i(\btheta,t)\Vert^2)}\Vert\Sigma^{-1}_{t,i}\Vert\epsilon_{\mathrm{DSM}}\\
    &=\left(\epsilon_i\sqrt{\mathbb{E}_{\btheta}(\Vert s_i(\btheta,t)\Vert^2)}+\Vert\Sigma^{-1}_{t,i}\Vert\epsilon_{\mathrm{DSM}}\right)^2.
    \refstepcounter{result}       
\tag{\result} 
\label{eq:result_2} 
\end{align*}
\underline{Result (R3)}:
\begin{align*}
    \mathbb{E}_{\btheta}&\left(\Vert\sum_{i=1}^n \tilde\Sigma^{-1}_{t,i}s_i(\btheta,t)-\Sigma^{-1}_{t,i}\nabla_{\btheta}\log p_t(\btheta\mid \bx_i)\Vert^2\right)\\
    &=\mathbb{E}_{\btheta}\sum_{i=1}^n \Vert\tilde\Sigma^{-1}_{t,i}s_i(\btheta,t)-\Sigma^{-1}_{t,i}\nabla_{\btheta}\log p_t(\btheta\mid \bx_i)\Vert^2 \ \ \text{expand the squared norm}\\
    &\ \ \ +\mathbb{E}_{\btheta} \sum_{i=1}^n \sum_{j \neq i}\left\langle \tilde\Sigma^{-1}_{t,i}s_i(\btheta,t)-\Sigma^{-1}_{t,i}\nabla_{\btheta}\log p_t(\btheta\mid \bx_i),\tilde\Sigma^{-1}_{t,j}s_j(\btheta,t)-\Sigma^{-1}_{t,j}\nabla_{\btheta}\log p_t(\btheta\mid x_j)\right\rangle\\
    &=\sum_{i=1}^n \mathbb{E}_{\btheta}\Vert\tilde\Sigma^{-1}_{t,i}s_i(\btheta,t)-\Sigma^{-1}_{t,i}\nabla_{\btheta}\log p_t(\btheta\mid \bx_i)\Vert^2 \ \ \ \text{linearity of expectation}\\
    &\ \ \ +\sum_{i=1}^n \sum_{j \neq i}\mathbb{E}_{\btheta} \left\langle \tilde\Sigma^{-1}_{t,i}s_i(\btheta,t)-\Sigma^{-1}_{t,i}\nabla_{\btheta}\log p_t(\btheta\mid \bx_i),\tilde\Sigma^{-1}_{t,j}s_j(\btheta,t)-\Sigma^{-1}_{t,j}\nabla_{\btheta}\log p_t(\btheta\mid x_j)\right\rangle\\
      &\leq\sum_{i=1}^n \left(\epsilon_i\sqrt{\mathbb{E}_{\btheta}(\Vert s_i(\btheta,t)\Vert^2)}+\Vert\Sigma^{-1}_{t,i}\Vert\epsilon_{\mathrm{DSM}}\right)^2 \ \ \ \text{by Result~(\ref{eq:result_2})}\\
      &\ \ \ +\sum_{i=1}^n \sum_{j \neq i}\mathbb{E}_{\btheta} \Vert \tilde\Sigma^{-1}_{t,i}s_i(\btheta,t)-\Sigma^{-1}_{t,i}\nabla_{\btheta}\log p_t(\btheta\mid \bx_i)\Vert\Vert\tilde\Sigma^{-1}_{t,j}s_j(\btheta,t)-\Sigma^{-1}_{t,j}\nabla_{\btheta}\log p_t(\btheta\mid x_j)\Vert \\
      &\quad \text{using Cauchy--Schwarz inequality}\\
      &\leq\sum_{i=1}^n \left(\epsilon_i\sqrt{\mathbb{E}_{\btheta}(\Vert s_i(\btheta,t)\Vert^2)}+\Vert\Sigma^{-1}_{t,i}\Vert\epsilon_{\mathrm{DSM}}\right)^2\\
      &\ \ \ +\sum_{i=1}^n \sum_{j \neq i}\sqrt{\mathbb{E}_{\btheta} (\Vert \tilde\Sigma^{-1}_{t,i}s_i(\btheta,t)-\Sigma^{-1}_{t,i}\nabla_{\btheta}\log p_t(\btheta\mid \bx_i)\Vert^2)}\sqrt{\mathbb{E}_{\btheta}(\Vert\tilde\Sigma^{-1}_{t,j}s_j(\btheta,t)-\Sigma^{-1}_{t,j}\nabla_{\btheta}\log p_t(\btheta\mid x_j)\Vert^2)}\\
      &\ \ \ \text{Cauchy--Schwarz a second time}\\
       &\leq\sum_{i=1}^n \left(\epsilon_i\sqrt{\mathbb{E}_{\btheta}(\Vert s_i(\btheta,t)\Vert^2)}+\Vert\Sigma^{-1}_{t,i}\Vert\epsilon_{\mathrm{DSM}}\right)^2\\
      &\ \ \ +\sum_{i=1}^n \sum_{j \neq i}\left(\epsilon_i\sqrt{\mathbb{E}_{\btheta}(\Vert s_i(\btheta,t)\Vert^2)}+\Vert\Sigma^{-1}_{t,i}\Vert\epsilon_{\mathrm{DSM}}\right)\left(\epsilon_j\sqrt{\mathbb{E}_{\btheta}(\Vert s_j(\btheta,t)\Vert^2)}+\Vert\Sigma^{-1}_{t,j}\Vert\epsilon_{\mathrm{DSM}}\right) \\
      &\quad \text{by Result~(\ref{eq:result_2})}\\
      &=\left(\sum_{i=1}^n \epsilon_i\sqrt{\mathbb{E}_{\btheta}(\Vert s_i(\btheta,t)\Vert^2)}+\Vert\Sigma^{-1}_{t,i}\Vert\epsilon_{\mathrm{DSM}}\right)^2.
      \refstepcounter{result}       
\tag{\result}   
\label{eq:result_3} 
\end{align*}
We are now ready to derive the proof of Lemma~\ref{lemma_202}.

\begin{align*}
    \mathbb{E}_{\btheta}&\left(\Vert\tilde \Gamma(\btheta,t;\bx_{1:n}) -\Gamma(\btheta,t;\bx_{1:n})\Vert^2\right)\\
 &= \mathbb{E}_{\btheta}\left\Vert(1 - n)\left(\tilde \Sigma^{-1}_{
t,\lambda}s_{\lambda}(\btheta,t)-\Sigma^{-1}_{
t,\lambda}\nabla_{\btheta}\log \lambda_t(\btheta)\right) + \sum_{i=1}^n \left(\tilde\Sigma^{-1}_{t,i}s_i(\btheta,t)-\Sigma^{-1}_{t,i}\nabla_{\btheta}\log p_t(\btheta\mid \bx_i)\right)\right\Vert^2\\
&=\mathbb{E}_{\btheta}\left(\left\Vert(1 - n)\left(\tilde \Sigma^{-1}_{
t,\lambda}s_{\lambda}(\btheta,t)-\Sigma^{-1}_{
t,\lambda}\nabla_{\btheta}\log \lambda_t(\btheta)\right)\right\Vert^2\right) \\
&\ \ \ + \mathbb{E}_{\btheta}\left(\left\Vert\sum_{i=1}^n \left(\tilde\Sigma^{-1}_{t,i}s_i(\btheta,t)-\Sigma^{-1}_{t,i}\nabla_{\btheta}\log p_t(\btheta\mid \bx_i)\right)\right\Vert^2\right)\\
&\ \ \ + 2\mathbb{E}_{\btheta}\left\langle (1 - n)\left(\tilde \Sigma^{-1}_{
t,\lambda}s_{\lambda}(\btheta,t)-\Sigma^{-1}_{
t,\lambda}\nabla_{\btheta}\log \lambda_t(\btheta)\right),\sum_{i=1}^n \left(\tilde\Sigma^{-1}_{t,i}s_i(\btheta,t)-\Sigma^{-1}_{t,i}\nabla_{\btheta}\log p_t(\btheta\mid \bx_i)\right)\right \rangle\\
&\ \ \ \text{expanding the squared norm}\\
&\leq(1-n)^2\left(\epsilon_{\lambda}\sqrt{\mathbb{E}_{\btheta}(\Vert s_{\lambda}(\btheta,t)\Vert^2)}+\Vert\Sigma^{-1}_{
t,\lambda}\Vert\epsilon_{\mathrm{DSM},\lambda}\right)^2 \\
&\quad + \left(\sum_{i=1}^n \epsilon_i\sqrt{\mathbb{E}_{\btheta}(\Vert s_i(\btheta,t)\Vert^2)}\Vert+\Vert\Sigma^{-1}_{t,i}\Vert\epsilon_{\mathrm{DSM}}\right)^2 \\
&\ \ \ + 2\sum_{i=1}^n \mathbb{E}_{\btheta}\left\langle (1-n)\left(\tilde \Sigma^{-1}_{
t,\lambda}s_{\lambda}(\btheta,t)-\Sigma^{-1}_{
t,\lambda}\nabla_{\btheta}\log \lambda_t(\btheta)\right),\tilde\Sigma^{-1}_{t,i}s_i(\btheta,t)-\Sigma^{-1}_{t,i}\nabla_{\btheta}\log p_t(\btheta\mid \bx_i)\right \rangle\\
&\ \ \text{by Results~(\ref{eq:result_1}) and (\ref{eq:result_3}) and linearity of expectation and scalar product}\\
&\leq(1-n)^2\left(\epsilon_{\lambda}\sqrt{\mathbb{E}_{\btheta}(\Vert s_{\lambda}(\btheta,t)\Vert^2)}+\Vert\Sigma^{-1}_{
t,\lambda}\Vert\epsilon_{\mathrm{DSM},\lambda}\right)^2 \\
&\ \ \ + \left(\sum_{i=1}^n \epsilon_i\sqrt{\mathbb{E}_{\btheta}(\Vert s_i(\btheta,t)\Vert^2)}\Vert+\Vert\Sigma^{-1}_{t,i}\Vert\epsilon_{\mathrm{DSM}}\right)^2\\
&\ \ \ + 2\sum_{i=1}^n (n-1)\mathbb{E}_{\btheta}\Vert\tilde \Sigma^{-1}_{
t,\lambda}s_{\lambda}(\btheta,t)-\Sigma^{-1}_{
t,\lambda}\nabla_{\btheta}\log \lambda_t(\btheta)\Vert\Vert\tilde\Sigma^{-1}_{t,i}s_i(\btheta,t)-\Sigma^{-1}_{t,i}\nabla_{\btheta}\log p_t(\btheta\mid \bx_i)\Vert\\
&\ \ \text{by Cauchy--Schwarz} \ \ \langle \bx,\mathbf{y}\rangle \leq |\langle \bx,\mathbf{y}\rangle | \leq \Vert \bx\Vert\Vert \mathbf{y}\Vert \ \ \text{and} \ \ n\geq 1\\
&\leq(1-n)^2\left(\epsilon_{\lambda}\sqrt{\mathbb{E}_{\btheta}(\Vert s_{\lambda}(\btheta,t)\Vert^2)}+\Vert\Sigma^{-1}_{
t,\lambda}\Vert\epsilon_{\mathrm{DSM},\lambda}\right)^2 \\
&\ \ \ + \left(\sum_{i=1}^n \epsilon_i\sqrt{\mathbb{E}_{\btheta}(\Vert s_i(\btheta,t)\Vert^2)}\Vert+\Vert\Sigma^{-1}_{t,i}\Vert\epsilon_{\mathrm{DSM}}\right)^2\\
&\ \ \ + 2(n-1)\sum_{i=1}^n \sqrt{\mathbb{E}_{\btheta}\Vert\tilde \Sigma^{-1}_{
t,\lambda}s_{\lambda}(\btheta,t)-\Sigma^{-1}_{
t,\lambda}\nabla_{\btheta}\log \lambda_t(\btheta)\Vert^2}\sqrt{\mathbb{E}_{\btheta}\Vert\tilde\Sigma^{-1}_{t,i}s_i(\btheta,t)-\Sigma^{-1}_{t,i}\nabla_{\btheta}\log p_t(\btheta\mid \bx_i)\Vert^2} \\
&\ \ \text{by Cauchy--Schwarz} \ \ \mathbb{E}(|XY|)\leq \sqrt{\mathbb{E}(X^2)}\sqrt{\mathbb{E}(Y^2)}\\
&\leq(1-n)^2\left(\epsilon_{\lambda}\sqrt{\mathbb{E}_{\btheta}(\Vert s_{\lambda}(\btheta,t)\Vert^2)}+\Vert\Sigma^{-1}_{
t,\lambda}\Vert\epsilon_{\mathrm{DSM},\lambda}\right)^2 \\
&\ \ \ + \left(\sum_{i=1}^n \epsilon_i\sqrt{\mathbb{E}_{\btheta}(\Vert s_i(\btheta,t)\Vert^2)}+\Vert\Sigma^{-1}_{t,i}\Vert\epsilon_{\mathrm{DSM}}\right)^2\\
&\ \ \ + 2(n-1)\sum_{i=1}^n \left(\epsilon_{\lambda}\sqrt{\mathbb{E}_{\btheta}(\Vert s_{\lambda}(\btheta,t)\Vert^2)}+\Vert\Sigma^{-1}_{
t,\lambda}\Vert\epsilon_{\mathrm{DSM},\lambda}\right)\left(\epsilon_i\sqrt{\mathbb{E}_{\btheta}(\Vert s_i(\btheta,t)\Vert^2)}+\Vert\Sigma^{-1}_{t,i}\Vert\epsilon_{\mathrm{DSM}}\right)\\
&\ \ \ \text{by Results~(\ref{eq:result_1}) and (\ref{eq:result_2})}\\
&=\left((n-1)\left(\epsilon_{\lambda}\sqrt{\mathbb{E}_{\btheta}(\Vert s_{\lambda}(\btheta,t)\Vert^2)}+\Vert\Sigma^{-1}_{
t,\lambda}\Vert\epsilon_{\mathrm{DSM},\lambda}\right)+\sum_{i=1}^n\left(\epsilon_i\sqrt{\mathbb{E}_{\btheta}(\Vert s_i(\btheta,t)\Vert^2)}+\Vert\Sigma^{-1}_{t,i}\Vert\epsilon_{\mathrm{DSM}}\right)\right)^2\\
&\leq\left((n-1)\left(\epsilon_{\lambda}\sqrt{\mathbb{E}_{\btheta}(\Vert s_{\lambda}(\btheta,t)\Vert^2)}+\Vert\Sigma^{-1}_{
t,\lambda}\Vert\epsilon_{\mathrm{DSM},\lambda}\right)+\sum_{i=1}^n\left(\epsilon \sqrt{\mathbb{E}_{\btheta}(\Vert s_i(\btheta,t)\Vert^2)}+\Vert\Sigma^{-1}_{t,i}\Vert\epsilon_{\mathrm{DSM}}\right)\right)^2\\
&\ \ \ \text{since we assume that} \ \epsilon_i\leq\epsilon \ \forall \ i.
\end{align*}
\end{proof}
\begin{lemma}\label{lemma_203}
    Suppose that 
\begin{itemize}
    \item $\Vert\tilde \Sigma^{-1}_{
t,\lambda}-\Sigma^{-1}_{
t,\lambda}\Vert\leq \epsilon_{\lambda}$ and for all $j=1,\ldots,n$ $\Vert\tilde \Sigma^{-1}_{t,j}-\Sigma^{-1}_{t,j}\Vert\leq \epsilon_j$
\item $s_{\lambda}(\btheta,t)$ $\big($resp. $s_{j}(\btheta,t)\big)$ is a score estimate of $\nabla_{\btheta}\log \lambda_t(\btheta)$ $\big($resp. $\nabla_{\btheta}\log p_t(\btheta\mid x_j)\big)$
\end{itemize} Then it holds
    \begin{align*}
         \mathbb{E}_{p_t(\cdot |\bx_{1:n})}(\Vert\tilde \Gamma(\btheta,t;\bx_{1:n})\Vert^2)&\leq \left((n-1)(\Vert \Sigma^{-1}_{
t,\lambda}\Vert+\epsilon_{\lambda})\sqrt{\mathbb{E}_{p_t(\cdot |\bx_{1:n}) }(\Vert s_{\lambda}(\btheta,t)\Vert^2)} \color{white}\sum_{i=1}^n(\Vert \Sigma^{-1}_{t,i}\Vert\color{black} \right.\\
&\left. +\sum_{i=1}^n(\Vert \Sigma^{-1}_{t,i}\Vert+\epsilon_i)\sqrt{\mathbb{E}_{p_t(\cdot |\bx_{1:n})}(\Vert s_i(\btheta,t)\Vert^2)}\right)^2.
    \end{align*}
\end{lemma}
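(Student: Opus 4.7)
\textbf{Proof proposal for Lemma~\ref{lemma_203}.} The plan is to follow the same two-level triangular / Cauchy--Schwarz strategy used in the proof of Lemma~\ref{lemma_202}, but without ever having to subtract off a ``true'' quantity $\Gamma$. The starting point is the definition
\begin{equation*}
\tilde\Gamma(\btheta,t;\bx_{1:n}) = (1-n)\,\tilde\Sigma^{-1}_{t,\lambda}\, s_{\lambda}(\btheta,t) + \sum_{i=1}^n \tilde\Sigma^{-1}_{t,i}\, s_i(\btheta,t).
\end{equation*}
First, I would control the perturbed operator norms pointwise: since $\Vert\tilde\Sigma^{-1}_{t,\lambda}-\Sigma^{-1}_{t,\lambda}\Vert\leq\epsilon_\lambda$ and $\Vert\tilde\Sigma^{-1}_{t,i}-\Sigma^{-1}_{t,i}\Vert\leq\epsilon_i$, the reverse triangular inequality for the spectral norm gives $\Vert\tilde\Sigma^{-1}_{t,\lambda}\Vert\leq \Vert\Sigma^{-1}_{t,\lambda}\Vert+\epsilon_\lambda$ and $\Vert\tilde\Sigma^{-1}_{t,i}\Vert\leq \Vert\Sigma^{-1}_{t,i}\Vert+\epsilon_i$.

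Next, the triangular inequality in Euclidean norm combined with the submultiplicativity $\Vert Ax\Vert\leq \Vert A\Vert\Vert x\Vert$ yields
\begin{equation*}
\Vert \tilde\Gamma(\btheta,t;\bx_{1:n})\Vert \leq (n-1)(\Vert \Sigma^{-1}_{t,\lambda}\Vert+\epsilon_\lambda)\Vert s_{\lambda}(\btheta,t)\Vert + \sum_{i=1}^n (\Vert \Sigma^{-1}_{t,i}\Vert+\epsilon_i)\Vert s_i(\btheta,t)\Vert,
\end{equation*}
using that $n\geq 1$ so $|1-n|=n-1$. Squaring this pointwise bound produces a $(n+1)\times(n+1)$ grid of nonnegative cross terms of the form $c_a c_b \Vert s_a(\btheta,t)\Vert \Vert s_b(\btheta,t)\Vert$, where $c_a$ collects the matrix-norm prefactors.

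Finally, I would take $\mathbb{E}_{\btheta\sim p_t(\cdot\mid\bx_{1:n})}$ of each such term and apply Cauchy--Schwarz in the form $\mathbb{E}(|XY|)\leq \sqrt{\mathbb{E}(X^2)}\sqrt{\mathbb{E}(Y^2)}$ to decouple the two factors. This turns each cross product into $c_a c_b \sqrt{\mathbb{E}(\Vert s_a\Vert^2)}\sqrt{\mathbb{E}(\Vert s_b\Vert^2)}$, and reassembling the full sum reproduces exactly the square $\bigl((n-1)(\Vert\Sigma^{-1}_{t,\lambda}\Vert+\epsilon_\lambda)\sqrt{\mathbb{E}(\Vert s_\lambda\Vert^2)} + \sum_i (\Vert\Sigma^{-1}_{t,i}\Vert+\epsilon_i)\sqrt{\mathbb{E}(\Vert s_i\Vert^2)}\bigr)^2$ appearing in the statement.

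The argument is conceptually routine once the bounds on $\Vert\tilde\Sigma^{-1}\Vert$ are established; the only mildly tedious step is the bookkeeping of the cross terms when expanding the square, but this is purely algebraic and mirrors Results~(\ref{eq:result_2}) and (\ref{eq:result_3}) above. No dependence between the score estimates and the posterior is required, since everything is handled via Cauchy--Schwarz rather than independence. The main (very mild) obstacle is simply ensuring the indexing is consistent across the diagonal and off-diagonal pieces so that the final expression collapses back into a single squared sum.
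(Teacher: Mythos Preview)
Your proposal is correct. The ingredients---triangular inequality, submultiplicativity $\Vert Ax\Vert\leq\Vert A\Vert\Vert x\Vert$, the bound $\Vert\tilde\Sigma^{-1}\Vert\leq\Vert\Sigma^{-1}\Vert+\epsilon$, and Cauchy--Schwarz on the expectation of cross terms---are exactly those used in the paper's proof. The only organisational difference is the order of operations: you bound $\Vert\tilde\Gamma\Vert$ pointwise \emph{before} squaring, which immediately yields a sum of nonnegative terms whose square expands cleanly; the paper instead expands $\Vert\tilde\Gamma\Vert^2$ first, obtaining inner products that must be bounded via an additional Cauchy--Schwarz step $\langle u,v\rangle\leq\Vert u\Vert\Vert v\Vert$, and then invokes $\sum a_i^2\leq(\sum a_i)^2$ to collapse the diagonal block. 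Your route is therefore marginally more direct (it avoids that last inequality), while the paper's mirrors the structure of its Lemma~\ref{lemma_202} more closely. Both land on the same bound with no loss.
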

\begin{proof} Here again the expectations are taken over $\btheta$ under the diffused multi-observation posterior $p_t(\btheta\mid \bx_{1:n})$.
\begin{align*}
    \mathbb{E}_{\btheta}\left(\Vert\tilde \Gamma(\btheta,t;\bx_{1:n})\Vert^2\right)&=\mathbb{E}_{\btheta}\left(\Vert(1 - n)\tilde \Sigma^{-1}_{
t,\lambda}s_{\lambda}(\btheta,t) + \sum_{i=1}^n \tilde \Sigma^{-1}_{t,i}s_i(\btheta,t)\Vert^2\right)\\
&= (1 - n)^2\mathbb{E}_{\btheta }\left(\Vert\tilde \Sigma^{-1}_{
t,\lambda}s_{\lambda}(\btheta,t)\Vert^2\right)+\sum_{i=1}^n \mathbb{E}_{\btheta }\left(\Vert\tilde \Sigma^{-1}_{t,i}s_i(\btheta,t)\Vert^2\right)\\
&\ \ \ \text{expanding the squared norm}\\
&\ \ \ +2(1-n)\mathbb{E}_{\btheta }\left \langle \tilde \Sigma^{-1}_{
t,\lambda}s_{\lambda}(\btheta,t),\sum_{i=1}^n \tilde \Sigma^{-1}_{t,i}s_i(\btheta,t)\right \rangle\\
&= (1 - n)^2\mathbb{E}_{\btheta }\left(\Vert\tilde \Sigma^{-1}_{
t,\lambda}s_{\lambda}(\btheta,t)\Vert^2\right)+\sum_{i=1}^n \mathbb{E}_{\btheta }\left(\Vert\tilde \Sigma^{-1}_{t,i}s_i(\btheta,t)\Vert^2\right)\\
&\ \ \ +2(1-n)\sum_{i=1}^n\mathbb{E}_{\btheta }\left \langle \tilde \Sigma^{-1}_{
t,\lambda}s_{\lambda}(\btheta,t), \tilde \Sigma^{-1}_{t,i}s_i(\btheta,t)\right \rangle \\
&\ \ \ \text{linearity of scalar product}\\
&\leq (1 - n)^2\Vert\tilde \Sigma^{-1}_{
t,\lambda}\Vert^2\mathbb{E}_{\btheta }\left(\Vert s_{\lambda}(\btheta,t)\Vert^2\right)+\sum_{i=1}^n \Vert\tilde \Sigma^{-1}_{t,i}\Vert^2\mathbb{E}_{\btheta }\left(\Vert s_i(\btheta,t)\Vert^2\right)\\
&\ \ \ +2(n-1)\sum_{i=1}^n\mathbb{E}_{\btheta }\left(\Vert\tilde \Sigma^{-1}_{
t,\lambda}s_{\lambda}(\btheta,t)\Vert\Vert\tilde \Sigma^{-1}_{t,i}s_i(\btheta,t)\Vert\right) \\
&\ \ \ \text{Cauchy--Schwarz and} \ \ n\geq 1\\
&\leq (1 - n)^2\Vert\tilde \Sigma^{-1}_{
t,\lambda}\Vert^2\mathbb{E}_{\btheta }\Vert s_{\lambda}(\btheta,t)\Vert^2+\sum_{i=1}^n \Vert\tilde \Sigma^{-1}_{t,i}\Vert^2\mathbb{E}_{\btheta }\Vert s_i(\btheta,t)\Vert^2\\
&\ \ \ +2(n-1)\sum_{i=1}^n\sqrt{\mathbb{E}_{\btheta }\Vert\tilde \Sigma^{-1}_{
t,\lambda}s_{\lambda}(\btheta,t)\Vert^2}\sqrt{\mathbb{E}_{\btheta }\Vert\tilde \Sigma^{-1}_{t,i}s_i(\btheta,t)\Vert^2} \\
&\ \ \ \text{Cauchy--Schwarz inequality}\\
&\leq (1 - n)^2\Vert\tilde \Sigma^{-1}_{
t,\lambda}\Vert^2\mathbb{E}_{\btheta }\Vert s_{\lambda}(\btheta,t)\Vert^2+\sum_{i=1}^n \Vert\tilde \Sigma^{-1}_{t,i}\Vert^2\mathbb{E}_{\btheta }\Vert s_i(\btheta,t)\Vert^2\\
&\ \ \ +2(n-1)\Vert\tilde \Sigma^{-1}_{
t,\lambda}\Vert\sqrt{\mathbb{E}_{\btheta }(\Vert s_{\lambda}(\btheta,t)\Vert^2)}\sum_{i=1}^n\Vert\tilde \Sigma^{-1}_{t,i}\Vert\sqrt{\mathbb{E}_{\btheta }(\Vert s_i(\btheta,t)\Vert^2)}\\
&\leq (1 - n)^2\Vert\tilde \Sigma^{-1}_{
t,\lambda}\Vert^2\mathbb{E}_{\btheta }\Vert s_{\lambda}(\btheta,t)\Vert^2+\left(\sum_{i=1}^n \Vert\tilde \Sigma^{-1}_{t,i}\Vert\sqrt{\mathbb{E}_{\btheta }\Vert s_i(\btheta,t)\Vert^2}\right)^2\\
&\ \ \ +2(n-1)\Vert\tilde \Sigma^{-1}_{
t,\lambda}\Vert\sqrt{\mathbb{E}_{\btheta }(\Vert s_{\lambda}(\btheta,t)\Vert^2)}\sum_{i=1}^n\Vert\tilde \Sigma^{-1}_{t,i}\Vert\sqrt{\mathbb{E}_{\btheta }(\Vert s_i(\btheta,t)\Vert^2)}\\
&\ \ \text{since} \  \sum a_i^2\leq \left(\sum a_i\right)^2 \ \text{when} \ a_i\geq 0\\
&=\left((n-1)\Vert\tilde \Sigma^{-1}_{
t,\lambda}\Vert\sqrt{\mathbb{E}_{\btheta }(\Vert s_{\lambda}(\btheta,t)\Vert^2)}+\sum_{i=1}^n\Vert\tilde \Sigma^{-1}_{t,i}\Vert\sqrt{\mathbb{E}_{\btheta }(\Vert s_i(\btheta,t)\Vert^2)}\right)^2\\
&\leq \left[(n-1)(\Vert \Sigma^{-1}_{
t,\lambda}\Vert+\epsilon_{\lambda})\sqrt{\mathbb{E}_{\btheta }(\Vert s_{\lambda}(\btheta,t)\Vert^2)}\right.\\
&\left.+\sum_{i=1}^n(\Vert \Sigma^{-1}_{t,i}\Vert+\epsilon_i)\sqrt{\mathbb{E}_{\btheta }(\Vert s_i(\btheta,t)\Vert^2)}\right]^2\\ 
&\text{using triangular inequality to bound} \ \ \Vert\tilde \Sigma^{-1}_{t,\lambda}\Vert \ \ \text{and} \ \ \Vert\tilde \Sigma^{-1}_{t,i}\Vert.
\end{align*}
\end{proof}
\subsubsection{Proof of detailed version of Proposition~\ref{prop:diffusion_discretization_2}}\label{proof_prop_2}
The final expressions of the multi-observation score~(\ref{eq:compositional-score}) and its estimate~(\ref{eq:score-estimate}) are the following:
\begin{align*}
    \nabla\log p_t(\btheta\mid \bx_{1:n}) &= \Lambda ^{-1}\Gamma(\btheta,t;\bx_{1:n}),\\
    s(\btheta,t;\bx_{1:n})&=\tilde \Lambda ^{-1}\tilde \Gamma(\btheta,t;\bx_{1:n}),
\end{align*}
where $\Lambda = (1-n)\Sigma_{t,\lambda}^{-1} + \sum_{i = 1}^n \Sigma_{t,i}^{-1}$ and $\tilde{\Lambda}= (1-n)\tilde\Sigma_{t,\lambda}^{-1} + \sum_{i = 1}^n \tilde\Sigma_{t,i}^{-1}$.\\
We propose here a more detailed version of Proposition~\ref{prop:diffusion_discretization_2} than what was presented in the main text: we do not assume a Gaussian prior, and thus need to estimate both the prior score $\nabla_{\btheta}\log \lambda_t(\btheta)$ and the precision matrix $\Sigma_{\lambda}^{-1}$.\\

\textbf{Proposition~\ref{prop:diffusion_discretization_2} (Compositional score error - detailed version)}
\textit{Let  $\eta$ (resp. $\eta_{\lambda}$) be chosen as in Proposition~\ref{prop:diffusion_discretization_1} and denote 
\begin{equation*}
\begin{array}{rcl}
M = \max_j \|\Sigma_{t, j}^{-1}\| &\text{with}& L=\max_j \mathbb{E}_{\btheta \sim p_t(\cdot \mid \bx_{1:n})}\left(\Vert\nabla\log p_t(\btheta| \bx_j)\Vert^2\right) \\[1em]
M_\lambda \geq \|\Sigma^{-1}_{t, \lambda}\| & \text{with} & L_{\lambda} \geq \mathbb{E}_{\btheta \sim p_t(\cdot \mid \bx_{1:n})}\left(\Vert\nabla\log \lambda_t(\btheta)\Vert^2\right).
\end{array}
\end{equation*}
Suppose that
\begin{align*}
    \mathbb{E}_{\btheta \sim p_t(\cdot |\bx_{1:n})}(\Vert \nabla_{\btheta}\log p_t(\btheta|\bx_j)-s_{\phi}(\btheta,\bx_j,t)\Vert^2) &\leq \epsilon_{\mathrm{DSM}}^2, \ \text{for all}\quad j=1,\ldots,n\\
    \mathbb{E}_{\btheta \sim p_t(\cdot |\bx_{1:n})}(\Vert \nabla_{\btheta}\log \lambda_t(\btheta)-s_{\lambda}(\btheta,t)\Vert^2) &\leq \epsilon_{\mathrm{DSM},\lambda}^2\\
    \Vert\tilde\Sigma_j^{-1}-\Sigma_j^{-1}\Vert &\leq \epsilon, \ \text{for all}\quad j=1,\ldots,n\\ 
    \Vert\tilde\Sigma_{\lambda}^{-1}-\Sigma_{\lambda}^{-1}\Vert &\leq \epsilon_{\lambda}, 
\end{align*}
with $(n-1)\epsilon_{\lambda}+n \epsilon < \frac{1}{\Vert\Lambda^{-1}\Vert}$. Then it holds 
\begin{multline*}
         \mathbb{E}_{\btheta \sim p_t(\cdot \mid \bx_{1:n})}\big[\Vert\nabla\log p_t(\btheta\mid \bx_{1:n})-s(\btheta,\bx_{1:n},t)\Vert^2 \big]\\
         \leq 
         \left[(n-1)\Vert\Lambda ^{-1}\Vert(\sqrt{L_{\lambda}}+\epsilon_{\mathrm{DSM},\lambda})\left(\epsilon_{\lambda}+\frac{((n-1)\epsilon_{\lambda}+n\epsilon)\Vert\Lambda^{-1} \Vert(M_{\lambda}+\epsilon_{\lambda})}{1-((n-1)\epsilon_{\lambda}+n\epsilon)\Vert\Lambda^{-1} \Vert}\right)\right.\\
         \left.
         + n \Vert\Lambda ^{-1}\Vert(\sqrt{L}+\epsilon_{\mathrm{DSM}})\left(\epsilon+\frac{((n-1)\epsilon_{\lambda}+n\epsilon)\Vert\Lambda^{-1} \Vert(M+\epsilon)}{1-((n-1)\epsilon_{\lambda}+n\epsilon)\Vert\Lambda^{-1} \Vert} \right)\right.\\
         \left. \color{white} \frac{M}{\Vert\Lambda\Vert}\color{black}+\Vert\Lambda^{-1}\Vert ((n-1)M_{\lambda}\epsilon_{\mathrm{DSM},\lambda}+n M\epsilon_{\mathrm{DSM}}\right]^2.
\end{multline*}
If $\epsilon_{\mathrm{DSM}}$ (resp. $\epsilon_{\mathrm{DSM},\lambda}$) is sufficiently small and accompanied by a proper choice of diffusion hyperparameters such that $\mathcal{W}_2(\tilde p(\btheta\mid \bx_j),p(\btheta\mid \bx_j))\leq \eta$ for all $j=1,\ldots,n$ (resp. $\mathcal{W}_2(\tilde \lambda(\btheta),\lambda(\btheta))\leq \eta_{\lambda}$) (Proposition 4 in~\citealp{gao2025wasserstein}), then the precision estimation error $\epsilon$ (resp. $\epsilon_{\lambda}$ for the prior precision error) can be further bounded using Proposition~\ref{prop:diffusion_discretization_1} and $\eta$ (resp. $\eta_{\lambda}$)}.

\begin{proof}
To simplify notations in the proofs, we write the expectation taken over the diffused multi-observation posterior $\mathbb{E}_{\btheta \sim p_t(\cdot \mid \bx_{1:n})}$ simply by $\mathbb{E}_{\btheta}$, and we use $s_i(\btheta,t):=s_{\phi}(\btheta,\bx_i,t)$ and $s_{\lambda}(\btheta,t)\approx \nabla_{\btheta}\log \lambda_t(\btheta)$. 

Recall that precision errors are time independent (see Appendix~\ref{intermediate_results}), i.e. for all $t\in [0,T]$:
\begin{equation*}
    \Vert\tilde \Sigma^{-1}_{t,j}-\Sigma^{-1}_{t,j}\Vert \leq \epsilon_j\leq\epsilon \quad \forall j=1,\ldots,n \quad \text{and} \quad \Vert\tilde \Sigma^{-1}_{t,\lambda}-\Sigma^{-1}_{t,\lambda}\Vert \leq \epsilon_{\lambda}.
\end{equation*} 
We have
\begin{align*}
    &\mathbb{E}_{\btheta }\Vert\nabla\log p_t(\btheta\mid \bx_{1:n})-s(\btheta,t;\bx_{1:n})\Vert^2\\
    &=\mathbb{E}_{\btheta}\Vert\Lambda ^{-1}\Gamma(\btheta,t;\bx_{1:n})-\tilde \Lambda ^{-1}\tilde \Gamma(\btheta,t;\bx_{1:n})\Vert^2\\
    &\leq \mathbb{E}_{\btheta}\left(\Vert\Lambda ^{-1}\Gamma(\btheta,t;\bx_{1:n})-\Lambda ^{-1}\tilde \Gamma(\btheta,t;\bx_{1:n})\Vert  + \Vert\Lambda ^{-1}\tilde \Gamma(\btheta,t;\bx_{1:n})-\tilde \Lambda ^{-1}\tilde \Gamma(\btheta,t;\bx_{1:n})\Vert\right)^2 \\ &\quad \text{using triangular inequality}\\
    &=\mathbb{E}_{\btheta}\Vert\Lambda ^{-1}\Gamma(\btheta,t;\bx_{1:n})-\Lambda ^{-1}\tilde \Gamma(\btheta,t;\bx_{1:n})\Vert^2  + \mathbb{E}_{\btheta}\Vert\Lambda ^{-1}\tilde \Gamma(\btheta,t;\bx_{1:n})-\tilde \Lambda ^{-1}\tilde \Gamma(\btheta,t;\bx_{1:n})\Vert^2\\
    &\ \ \ +2\mathbb{E}_{\btheta}\left(\Vert\Lambda ^{-1}\Gamma(\btheta,t;\bx_{1:n})-\Lambda ^{-1}\tilde \Gamma(\btheta,t;\bx_{1:n})\Vert\Vert\Lambda ^{-1}\tilde \Gamma(\btheta,t;\bx_{1:n})-\tilde \Lambda ^{-1}\tilde \Gamma(\btheta,t;\bx_{1:n})\Vert\right)\\
    &\quad \text{expanding the square}\\
    &\leq \mathbb{E}_{\btheta}\Vert\Lambda ^{-1}\Gamma(\btheta,t;\bx_{1:n})-\Lambda ^{-1}\tilde \Gamma(\btheta,t;\bx_{1:n})\Vert^2  + \mathbb{E}_{\btheta}\Vert\Lambda ^{-1}\tilde \Gamma(\btheta,t;\bx_{1:n})-\tilde \Lambda ^{-1}\tilde \Gamma(\btheta,t;\bx_{1:n})\Vert^2\\
    &\ \ \ +2\sqrt{\mathbb{E}_{\btheta}\Vert\Lambda ^{-1}\Gamma(\btheta,t;\bx_{1:n})-\Lambda ^{-1}\tilde \Gamma(\btheta,t;\bx_{1:n})\Vert^2}\sqrt{\mathbb{E}_{\btheta}\Vert\Lambda ^{-1}\tilde \Gamma(\btheta,t;\bx_{1:n})-\tilde \Lambda ^{-1}\tilde \Gamma(\btheta,t;\bx_{1:n})\Vert^2}\\
    &\quad \text{by Cauchy--Schwarz}\\
    &\leq \Vert\Lambda ^{-1}\Vert^2\mathbb{E}_{\btheta}\Vert\Gamma(\btheta,t;\bx_{1:n})-\tilde \Gamma(\btheta,t;\bx_{1:n})\Vert^2  + \Vert\Lambda ^{-1}-\tilde \Lambda ^{-1}\Vert^2\mathbb{E}_{\btheta}\Vert\tilde \Gamma(\btheta,t;\bx_{1:n})\Vert^2\\
    &\ \ \ +2\Vert\Lambda ^{-1}\Vert\sqrt{\mathbb{E}_{\btheta}(\Vert\Gamma(\btheta,t;\bx_{1:n})-\tilde \Gamma(\btheta,t;\bx_{1:n})\Vert^2)}\Vert\Lambda ^{-1}-\tilde \Lambda ^{-1}\Vert\sqrt{\mathbb{E}_{\btheta}(\Vert\tilde \Gamma(\btheta,t;\bx_{1:n})\Vert^2)}\\
    &=\left(\Vert\Lambda ^{-1}\Vert\sqrt{\mathbb{E}_{\btheta}\Vert\Gamma(\btheta,t;\bx_{1:n})-\tilde \Gamma(\btheta,t;\bx_{1:n})\Vert^2}+\Vert\Lambda ^{-1}-\tilde \Lambda ^{-1}\Vert\sqrt{\mathbb{E}_{\btheta}\Vert\tilde \Gamma(\btheta,t;\bx_{1:n})\Vert^2}\right)^2\\
    &\leq\left[\Vert\Lambda ^{-1}\Vert\left((n - 1)\left(\epsilon_{\lambda}\sqrt{\mathbb{E}_{\btheta}(\Vert s_{\lambda}(\btheta,t)\Vert^2)}+\Vert\Sigma^{-1}_{t,\lambda}\Vert\epsilon_{\mathrm{DSM},\lambda}\right)+\sum_{i=1}^n\left(\epsilon \sqrt{\mathbb{E}_{\btheta}(\Vert s_i(\btheta,t)\Vert^2)}\Vert+\Vert\Sigma^{-1}_{t,i}\Vert\epsilon_{\mathrm{DSM}}\right)\right)\right.\\
    & \left. +\Vert\Lambda ^{-1}-\tilde \Lambda ^{-1}\Vert\left((n-1)\left(\Vert\Sigma^{-1}_{
t,\lambda}\Vert+\epsilon_{\lambda}\right)\sqrt{\mathbb{E}_{\btheta}(\Vert s_{\lambda}(\btheta,t)\Vert^2)}+\sum_{i=1}^n\left(\Vert\Sigma^{-1}_{
t,i}\Vert+\epsilon_i\right)\sqrt{\mathbb{E}_{\btheta}(\Vert s_i(\btheta,t)\Vert^2)}\right)\right]^2\\
&\quad \text{using Lemma~\ref{lemma_202} and Lemma~\ref{lemma_203}}\\
&\leq\left[(n-1)\Vert\Lambda ^{-1}\Vert\sqrt{\mathbb{E}_{\btheta}(\Vert s_{\lambda}(\btheta,t)\Vert^2)}\left(\epsilon_{\lambda}+\frac{((n-1)\epsilon_{\lambda}+n\epsilon)\Vert\Lambda^{-1} \Vert}{1-\left((n-1)\epsilon_{\lambda}+n\epsilon\right)\Vert\Lambda^{-1} \Vert}(\Vert\Sigma^{-1}_{
t,\lambda}\Vert+\epsilon_{\lambda} )\right)\right.\\
&\ \ \ +\sum_{i=1}^n \Vert\Lambda ^{-1}\Vert\sqrt{\mathbb{E}_{\btheta}(\Vert s_i(\btheta,t)\Vert^2)}\left(\epsilon+\frac{((n-1)\epsilon_{\lambda}+n\epsilon)\Vert\Lambda ^{-1}\Vert}{1-\left((n-1)\epsilon_{\lambda}+n\epsilon\right)\Vert\Lambda^{-1} \Vert} (\Vert\Sigma^{-1}_{t,i}\Vert+\epsilon)\right)\\
&\left. \ \ \ +\Vert\Lambda ^{-1}\Vert\left((n-1)\Vert\Sigma^{-1}_{t,\lambda}\Vert\epsilon_{\mathrm{DSM},\lambda}+\sum_{i=1}^n\Vert\Sigma^{-1}_{t,i}\Vert\epsilon_{\mathrm{DSM}}\right)\right]^2\\
&\quad \text{using Corollary~\ref{corollary_1}  and assuming} \quad \epsilon_i\leq \epsilon \quad \forall i.
\end{align*}
By assumption, we have $M=\max_i \Vert\Sigma^{-1}_{t,i}\Vert$ and $ \Vert\Sigma^{-1}_{t,\lambda}\Vert \leq M_{\lambda}$. This gives:
\begin{align*}
    \mathbb{E}_{\btheta}&\Vert\nabla\log p_t(\btheta\mid \bx_{1:n})-s(\btheta,t;\bx_{1:n})\Vert^2\\
    &\leq \left[(n-1)\Vert\Lambda ^{-1}\Vert\sqrt{\mathbb{E}_{\btheta}(\Vert s_{\lambda}(\btheta,t)\Vert^2)}\left(\epsilon_{\lambda}+\frac{(n-1)\epsilon_{\lambda}+n\epsilon)\Vert\Lambda^{-1} \Vert}{1-\left((n-1)\epsilon_{\lambda}+n\epsilon\right)\Vert\Lambda^{-1} \Vert} (M_{\lambda}+\epsilon_{\lambda})\right)\right.\\
&\ \ \ +\sum_{i=1}^n \Vert\Lambda ^{-1}\Vert\sqrt{\mathbb{E}_{\btheta}(\Vert s_i(\btheta,t)\Vert^2)}\left(\epsilon+\frac{(n-1)\epsilon_{\lambda}+n\epsilon)\Vert\Lambda^{-1} \Vert}{1-\left((n-1)\epsilon_{\lambda}+n\epsilon\right)\Vert\Lambda^{-1} \Vert} (M+\epsilon)\right)\\
&\left. \ \ \ +\Vert\Lambda ^{-1}\Vert\left((n-1)M_{\lambda}\epsilon_{\mathrm{DSM},\lambda}+nM\epsilon_{\mathrm{DSM}}\right)\color{white}\sqrt{\mathbb{E}} \color{black}\right]^2.
\end{align*}
As $\mathbb{E}_{\btheta}\left(\Vert\nabla\log \lambda_t(\btheta)\Vert^2\right)\leq L_{\lambda}$ and $L=\max_i \mathbb{E}_{\btheta }\Vert\nabla\log p_t(\btheta\mid \bx_i)\Vert^2$, then it holds for all $i=1,\ldots, n$:
\begin{align*}
\mathbb{E}_{\btheta}\Vert s_i(\btheta,t)\Vert^2 &\leq \mathbb{E}_{\btheta}[\left(\Vert\nabla\log p_t(\btheta\mid \bx_i)\Vert+\Vert s_i(\btheta,t)-\nabla\log p_t(\btheta\mid \bx_i)\Vert\right)^2] \\
&\leq \mathbb{E}_{\btheta }\Vert\nabla\log p_t(\btheta\mid \bx_i)\Vert^2 + \mathbb{E}_{\btheta }\Vert s_i(\btheta,t)-\nabla\log p_t(\btheta\mid \bx_i)\Vert^2 \\
&\ \ \ \ + 2\mathbb{E}_{\btheta }\Vert s_i(\btheta,t)-\nabla\log p_t(\btheta\mid \bx_i)\Vert\Vert\nabla\log p_t(\btheta\mid \bx_i)\Vert\\
&\leq \mathbb{E}_{\btheta }\Vert\nabla\log p_t(\btheta\mid \bx_i)\Vert^2 + \mathbb{E}_{\btheta }\Vert s_i(\btheta,t)-\nabla\log p_t(\btheta\mid \bx_i)\Vert^2 \\
&\ \ \ \ + 2\sqrt{\mathbb{E}_{\btheta }\Vert s_i(\btheta,t)-\nabla\log p_t(\btheta\mid \bx_i)\Vert^2}\sqrt{\mathbb{E}_{\btheta }\Vert\nabla\log p_t(\btheta\mid \bx_i)\Vert^2}\\
&\quad \text{by Cauchy--Schwarz inequality}\\
&\leq L + \epsilon_{\mathrm{DSM}}^2 + 2\sqrt{L}\epsilon_{\mathrm{DSM}} \ \ \ \text{by assumption on score matching error}\\
&=(\sqrt{L}+\epsilon_{\mathrm{DSM}})^2.
\end{align*}
Similarly,
\begin{equation*}
    \mathbb{E}_{\btheta}\Vert s_{\lambda}(\btheta,t)\Vert^2 \leq (\sqrt{L_{\lambda}}+\epsilon_{\mathrm{DSM},\lambda})^2.
\end{equation*}
Thus we finally get:
\begin{align*}
    \mathbb{E}_{\btheta \sim p_t(\cdot |\bx_{1:n})}&\Vert\nabla\log p_t(\btheta\mid \bx_{1:n})-s(\btheta,t;\bx_{1:n})\Vert^2\\
    &\leq \left[(n-1)\Vert\Lambda ^{-1}\Vert(\sqrt{L_{\lambda}}+\epsilon_{\mathrm{DSM},\lambda})\left(\epsilon_{\lambda}+\frac{(n-1)\epsilon_{\lambda}+n\epsilon)\Vert\Lambda^{-1} \Vert}{1-\left((n-1)\epsilon_{\lambda}+n\epsilon\right)\Vert\Lambda^{-1} \Vert}(M_{\lambda}+\epsilon_{\lambda})\right)\right.\\
&\ \ \ +n \Vert\Lambda ^{-1}\Vert(\sqrt{L}+\epsilon_{\mathrm{DSM}})\left(\epsilon+\frac{(n-1)\epsilon_{\lambda}+n\epsilon)\Vert\Lambda^{-1} \Vert}{1-\left((n-1)\epsilon_{\lambda}+n\epsilon\right)\Vert\Lambda^{-1} \Vert} (M+\epsilon)\right)\\
&\left. \ \ \ +\Vert\Lambda ^{-1}\Vert\left((n-1)M_{\lambda}\epsilon_{\mathrm{DSM},\lambda}+nM\epsilon_{\mathrm{DSM}}\right)\right]^2\\
&=\Big[(n-1)\Vert\Lambda ^{-1}\Vert\sqrt{L_{\lambda}}\left(\frac{n\epsilon\Vert\Lambda^{-1} \Vert M_{\lambda}}{1-n\epsilon\Vert\Lambda^{-1} \Vert}\right)\\
&\ \ \ +n \Vert\Lambda ^{-1}\Vert(\sqrt{L}+\epsilon_{\mathrm{DSM}})\left(\epsilon+\frac{n\epsilon\Vert\Lambda^{-1} \Vert}{1-n\epsilon\Vert\Lambda^{-1} \Vert} (M+\epsilon)\right)\\
& \ \ \ +\Vert\Lambda ^{-1}\Vert nM\epsilon_{\mathrm{DSM}}\Big]^2 \\
&\quad \text{if we assume} \ \epsilon_{\mathrm{DSM,\lambda}}=\epsilon_{\lambda}=0 \ \text{as in Proposition~\ref{prop:diffusion_discretization_2} in the main section}.
\end{align*}
\end{proof}
\subsection{Discussion on measure choice}\label{measure_disc}
In Proposition~\ref{prop:diffusion_discretization_2}, all the expectations in the assumptions are taken under the \textbf{multi-observation} (diffused) posterior distribution $p_t(\btheta \mid \bx_{1:n})$ even though the variables inside the expectations come from the (diffused) prior $\lambda_t(\btheta)$ or the \textbf{individual} (diffused) posteriors $p_t(\btheta\mid \bx_i)$. This measure choice is particularly visible for the individual score error $\mathbb{E}_{\btheta\sim p_t(\btheta\mid \bx_{1:n})}\Vert\nabla \log p_t(\btheta \mid \bx_i)-s_i(\btheta,t)\Vert^2\leq \epsilon^2_\mathrm{DSM}$, which is usually computed in expectation over the individual posterior $p_t(\btheta\mid \bx_i)$.
But we can link individual posterior to multi-observation posterior as follows:
\begin{align*}
    p_t(\btheta\mid \bx_i)&=\int p(\btheta_0\mid \bx_i)q_{t|0}(\btheta\mid \btheta_0)d\btheta_0 \quad \text{by definition of the individual diffused posterior}\\
    &= \int q_{t|0}(\btheta\mid \btheta_0)\int p(\btheta_0,\bx_{1:i-1},\bx_{i+1:n}\mid \bx_i)dx_{1:i-1}dx_{i+1:n} d\btheta_0 \quad \text{by marginalisation}\\
    &=\int q_{t|0}(\btheta\mid \btheta_0)\int p(\btheta_0\mid \bx_{1:n})p(\bx_{1:i-1},\bx_{i+1:n}\mid \bx_i)dx_{1:i-1}dx_{i+1:n} d\btheta_0\\
    &=\int p(\bx_{1:i-1},\bx_{i+1:n}\mid \bx_i) \int p(\btheta_0\mid \bx_{1:n}) q_{t|0}(\btheta\mid \btheta_0) d\btheta_0 dx_{1:i-1}dx_{i+1:n} \quad \text{by Fubini theorem}\\
    &=\int p(\bx_{1:i-1},\bx_{i+1:n}\mid \bx_i)  p_t(\btheta\mid \bx_{1:n}) dx_{1:i-1}dx_{i+1:n} \quad \text{by definition of the diffused posterior.}
\end{align*}
Thus we can apply the double expectation equality and get for any measurable function $f$:
\begin{equation*}
    \mathbb{E}_{\btheta\sim p_t(\cdot  \mid \bx_i)}(f(\btheta))=\mathbb{E}_{\bx_{1:i-1},\bx_{i+1:n}\sim p(.\mid \bx_i)}\mathbb{E}_{\btheta\sim p_t(\btheta\mid \bx_{1:n})}(f(\btheta)).
\end{equation*}
When $f(\btheta)=\Vert\nabla \log p_t(\btheta \mid \bx_i)-s_i(\btheta,t)\Vert^2$ for any $i=1,\ldots,n$ we have a bound for the individual score error:
\begin{align*}
    \mathbb{E}_{\btheta\sim p_t(\cdot  \mid \bx_i)}\Vert\nabla \log p_t(\btheta \mid \bx_i)-s_i(\btheta,t)\Vert^2&=\mathbb{E}_{\bx_{1:i-1},\bx_{i+1:n}\sim p(.\mid \bx_i)}\mathbb{E}_{\btheta\sim p_t(\btheta\mid \bx_{1:n})}\Vert\nabla \log p_t(\btheta \mid \bx_i)-s_i(\btheta,t)\Vert^2\\
    &\leq \mathbb{E}_{\bx_{1:i-1},\bx_{i+1:n}\sim p(.\mid \bx_i)} (\epsilon_{\mathrm{DSM}}^2) \ \ \ \ \text{by assumption on the individual score error}\\
    &=\epsilon_{\mathrm{DSM}}^2.
\end{align*}
So if we manage to bound the individual score error in average under the \textbf{multi-observation} (diffused) posterior for a specific time $t$ in the diffusion process, then this error will be bounded in average under any \textbf{individual} (diffused) posterior (by the same constant).

\end{document}